\newtheorem{assumption}{Assumption}
\newtheorem{theorem}{Theorem}
\newtheorem{definition}{Definition}
\newtheorem{corollary}{Corollary}
\newtheorem{lemma}{Lemma}
\title{Scalable Policy-Based RL Algorithms for POMDPs}
\author{%
  Ameya Anjarlekar \\
  UIUC\\
  \texttt{ameyasa2@illinois.edu} \\
  \And
  S.~Rasoul~Etesami \\
  UIUC\\
  \texttt{etesami1@illinois.edu} \\
    \And
  R.~Srikant \\
  UIUC\\
  \texttt{rsrikant@illinois.edu} \\
}
\begin{document}

\maketitle

\begin{abstract}
The continuous nature of belief states in POMDPs presents significant computational challenges in learning the optimal policy. In this paper, we consider an approach that solves a Partially Observable Reinforcement Learning (PORL) problem by approximating the corresponding POMDP model into a finite-state Markov Decision Process (MDP) (called \textit{Superstate} MDP). We first derive theoretical guarantees that improve upon prior work that relate the optimal value function of the transformed Superstate MDP to the optimal value function of the original POMDP. Next, we propose a policy-based learning approach with linear function approximation to learn the optimal policy for the \textit{Superstate} MDP. Consequently, our approach shows that a POMDP can be approximately solved using TD-learning followed by Policy Optimization by treating it as an MDP, where the MDP state corresponds to a finite history. We show that the approximation error decreases exponentially with the length of this history. To the best of our knowledge, our finite-time bounds are the first to explicitly quantify the error introduced when applying standard TD learning to a setting where the true dynamics are not Markovian.
\end{abstract}

\section{Introduction}

Reinforcement learning (RL) provides a robust and systematic framework for solving sequential decision-making problems by modeling these problems as Markov Decision Processes (MDPs). However, many real-world systems such as robotic controllers must operate under uncertainty, handling incomplete, noisy, or ambiguous data, making traditional RL techniques ineffective for such problems \cite{cass2}. 

To address such challenges, partially observable reinforcement learning (PORL) extends the RL framework by modeling these problems as Partially Observable Markov Decision Processes (POMDPs). This approach accounts for hidden states, enabling effective decision-making under uncertainty. Beyond these applications, PORL is widely applied in areas such as autonomous driving \cite{Held-2011}, personalized content recommendations \cite{News}, medical diagnosis \cite{HAUSKRECH}, and games \cite{Brown2019}, where decision-making under partial observability is crucial. As a result, addressing decision-making under uncertainty has become a critical topic across diverse fields such as operations research and healthcare.



Although POMDPs provide a general framework for modeling decision-making under uncertainty, solving them presents significant computational challenges even when the exact model of the POMDP is known. The early works of \cite{smallwood, ASTROM} focused on the planning problem and effectively converted POMDPs into MDPs via belief states. However, the continuous nature of belief states and the PSPACE-completeness of solving POMDPs \cite{papadimitriou, vlassis} make these problems computationally intractable. 

Various approaches have been proposed to address these limitations, with the aim of providing approximate solutions for learning POMDPs by selecting actions based on a finite history of observations. For example, \cite{Jaakkola, Williams, aziz} select actions based solely on current observations, while others, such as \cite{loch, Littman}, consider past $k$ observations to guide decision making. Nevertheless, while these approaches demonstrate empirical success, they lack rigorous theoretical performance guarantees. 

In contrast, for learning fully observable MDPs, a wide range of Reinforcement Learning algorithms exist with provable sample complexity bounds, ranging from value-iteration-based approaches such as Q-learning to policy-iteration-based algorithms such as actor-critic and natural policy gradient (NPG). A comprehensive survey of these algorithms can be found in \cite{bertsekas} and \cite{sutton}. 

The empirical success of finite-history-based algorithms for POMDPs and the availability of performance guarantees for standard RL algorithms motivate a critical research question: \emph{Can we leverage standard RL algorithms to approximately learn an optimal policy for a POMDP by treating it as an MDP, where the states correspond to the finite histories? Specifically, can we establish theoretical performance bounds for such an approach?} 

In this paper, we address this question by considering a previously proposed approximation by \cite{kara} that maps a POMDP to a finite-state MDP, called the \textit{ super-state} MDP by restricting the information at each time instant to a finite history of past observations. However, this approximation is difficult to study for the following reasons:
\begin{itemize}
    \item It is unclear whether using standard TD learning as the policy evaluation step in a problem where the true model is a POMDP would result in good performance or even convergence.
    \item The TD learning algorithm used in \cite{semih} uses a $m$-step version of TD as a workaround, which is computationally more expensive than standard TD learning.
\end{itemize}
In view of the limitations of the prior work, our main contributions are stated as follows:

\begin{itemize}
\item\textbf{Approximation guarantees for POMDP to MDP transformation:} A standard approach to assess the quality of a transformed MDP is to bound the difference between its optimal value function and that of the original POMDP. In prior work such as \cite{kara, mahajan}, this difference is bounded by the square of the expected horizon length, i.e., $\frac{1}{(1-\gamma)^2}$ where $\gamma$ is the discount factor, while, \cite{abel} provides a bound of $\frac{1}{(1-\gamma)^3}$, and the bound in \cite{semih} is polynomial in the horizon length. We improve upon these results by providing a tighter bound. Additionally, while \cite{kara} provides bounds on the expected difference between the optimal value functions, we establish a stronger worst-case bound.

\item\textbf{A General Purpose Algebraic Identity}: A key reason for improved bounds is the introduction of a novel algebraic result, i.e., Lemma \ref{lemma: norm bound} which proves effective in bounding expressions of the form \( \big|\sum_{i=1}^m (a_i b_i - c_i d_i) \big| \), when vectors \( (\boldsymbol{a}, \boldsymbol{c}) \) and \( (\boldsymbol{b}, \boldsymbol{d}) \) are close to each other, and \( (\boldsymbol{b}, \boldsymbol{d}) \) corresponds to probability mass functions.  Traditional approaches rely on decompositions and triangle inequality, often leading to loose bounds. In contrast, our refined approach yields tighter guarantees, improving the existing bounds. 

Our algebraic result is of broader interest and can be used to improve bounds in other analyses. As an example, we refine\footnote{Proof is in the Supplementary.} the performance bounds in \cite{mahajan}, which introduced approximate information states but lacked a systematic method for constructing them. Additionally, while their deep learning-based approach provides an empirical solution, it optimizes an objective misaligned with the theoretical guarantees. 
    
     \item \textbf{Addressing Challenges in Policy Optimization:} Having established that the Superstate MDP is a strong approximation of the POMDP, we investigate whether standard Policy Optimization algorithms can effectively learn its optimal policy. A key challenge arises from the fact that learning samples correspond to belief states of the original POMDP rather than the Superstate MDP, creating a sampling mismatch that raises concerns about whether applying standard RL techniques can yield strong performance guarantees.

\cite{semih} addresses this issue by modifying the TD-learning part of Policy Optimization by employing an $m$-step TD-learning approach, leading to significantly higher computational complexity. In contrast, our work is the first to establish convergence guarantees for a standard policy optimization algorithm that alternates between learning the Q-function for a fixed policy using TD-learning and updating the policy accordingly. This avoids additional computational overhead while providing stronger theoretical guarantees.
     
    \item \textbf{Performance Bounds for Linear Function Approximation Setting:} Apart from  \cite{semih} which leads to a higher computational cost, \cite{kara} considers Q-learning to learn the optimal policy for the Superstate MDP. However, along with relying on ergodicity assumptions, their approach faces significant scalability challenges due to the difficulty of proving convergence for Q-learning with linear function approximation. In contrast, we develop performance bounds for the Policy Optimization Algorithm and extend our analysis to the function approximation case, enabling scalability to large state spaces. 

A key challenge in proving convergence for Policy Optimization algorithms lies in the non-stationary nature of the sampling policy during the TD-learning phase. In contrast, the Q-learning approach in \cite{kara} can leverage a stationary exploratory sampling policy, simplifying the analysis. Additionally, they assume an additional ergodicity condition and provide only asymptotic convergence guarantees. In comparison, our approach avoids such restrictive assumptions and establishes finite-time performance bounds. 

 Finally, another minor contribution of our work is to extend the analysis for the POLITEX algorithm in \cite{politex} to the discounted reward setting and analyze the regret of our algorithm with respect to the optimal value function of the POMDP. 
    
\end{itemize}


\section{Related Work}

\medskip
\noindent\textbf{Planning algorithms for POMDP:} Early POMDP algorithms assumed full model knowledge and focused on exact belief-state calculations. Techniques such as Witness algorithm \cite{Cassandra}, Lovejoy's suboptimal algorithm \cite{lovejoy}, and incremental pruning \cite{Zhang} were aimed at making planning more efficient. Comprehensive surveys can be found in \cite{Krishnamurthy_2016, Murphy}. However, these approaches suffer from exponential complexity growth due to reliance on precise belief-state calculations, limiting their practicality. 

\medskip
\noindent\textbf{Using internal state representations to solve PORL problems:} To address the challenge of continuous belief states, internal state representations have been proposed. Works like \cite{Jaakkola, Williams, aziz} select actions based solely on current observations, while \cite{loch, Littman}, consider past $k$ observations to guide decision-making. Additionally, sliding window controllers \cite{Williams, loch, kara, sung, yu, cayci2024RNN, amiri} and memory-based techniques \cite{McCallum1993, meuleau} have demonstrated practical success. However, most of these methods are heuristic and lack formal guarantees.

\medskip
\noindent\textbf{Learning algorithms with provable guarantees:}
Recent efforts have focused on developing algorithms for solving PORL problems with theoretical guarantees. \cite{simon, Efroni2022} provide provable bounds, though these rely on the state decodability assumption, which implies that a finite history of observations can perfectly infer the current state. Other approaches \cite{lingwang, liu2022, Jin2020} target specific subclasses of PORL problems.

\medskip
\noindent\textbf{Deep learning inspired techniques:} It is also worth mentioning prior work on empirical approaches using deep learning. In particular, recurrent neural networks (RNNs) \cite{Hausknecht, Wierstra} and variational encoders \cite{Igl2018} have been used to model the uncertainty in PORL by capturing temporal dependencies. While these techniques have demonstrated impressive empirical results, they lack rigorous performance guarantees.

\section{Problem Description and Prior Results}

In this section, we introduce the problem formulation and review some well-known prior results; for basic POMDP theory, the reader is referred to \cite{Krishnamurthy_2016}. 

\subsection{POMDP Structure and Notations}
We consider a general finite-state POMDP model, which can be characterized as follows:
\begin{itemize}
    \item Consider a set of finite states $S$, with the state of the system at time $t$ denoted by $s_t \in \mathcal{S}$, which is \emph{not} observed. Also, assume that the initial state $s_0$ is sampled from a distribution $\mathcal{D}$, with the corresponding probability mass function represented by the vector $\boldsymbol{\pi}_0$.
    \item At any time $t$, the agent chooses an action $a_t = a$ from a finite set of possible actions $\mathcal{A}$, where, for simplicity, we assume that all actions from $\mathcal{A}$ are feasible for every state. The state then evolves according to a transition probability
    \begin{equation}\nonumber
         \mathcal{P}(s' \mid  s,a)=\mathbb{P}(s_{t+1} = s'  \mid   s_t = s, a_t = a).
    \end{equation}
    Moreover, the agent receives a reward $r(s,a)$, which is the reward obtained if performing an action $a$ in state $s$, where we assume finite reward, i.e., there exists $\bar{r}$ such that $|r(s,a)| \leq \bar{r} \ \forall s, a$.
    \item Since states cannot be observed, we assume that information about them at each time $t$ is obtained through a noisy discrete observation channel whose output $y_t \in \mathcal{Y}$ is chosen according to a conditional distribution $\Phi(y \mid  s) := \mathbb{P}(y_t = y  \mid   s_t = s),$ where $\mathcal{Y}$ is a finite set of observations. Therefore, after agent takes action $a_t$, it receives a reward $r_t$ and observes $y_{t+1}$.
    \item Let $H_t := \{a_0,y_1,a_1,\ldots,a_{t-1},y_t\}\in \mathbb{H}$ be the entire observed history up to time $t$, where $\mathbb{H}$ refers to the set of all possible histories (of any length) of the POMDP. Given a history $H\in \mathbb{H}$ and an initial distribution $\boldsymbol{\pi}_0$ over the states, we denote the probability of being in state $s$ by $\pi(s\mid H) = \mathbb{P}(s \mid  H,\boldsymbol{\pi}_0)$ and define the \emph{belief state} given history $H$ by $\boldsymbol{\pi}(H) = \begin{pmatrix} \pi(s \mid H) \end{pmatrix}_{s \in \mathcal{S}}$. In particular, the belief state\footnote{When there is no ambiguity, we drop the dependency of belief state on the history and write $\boldsymbol{\pi}=\boldsymbol{\pi}(H)$.} $\boldsymbol{\pi}$ belongs to $\mathbb{B} \subseteq \Sigma(\mathcal{S}),$ where $\Sigma(\mathcal{S}) := \{ x\in \mathbb{R}_+^{|\mathcal{S}|}: \sum_{i=1}^{|\mathcal{S}|} x_i = 1 \},$ and $\mathbb{B}$ refers to the set of all possible belief states that can be realized using a history $H\in\mathbb{H}$. Now, if at time $t,$ the agent takes an action $a_t=a$ and observes $y_t=y$, by Bayes' rule, the belief state can be updated as follows
    \begin{align}
    \label{Eq: belief update}
        \pi(s \mid  H_t)= \frac{\sum_{s'} \pi(s' \mid  H_{t-1}) \mathcal{P}(s \mid  s',a) \Phi(y \mid  s)}{\sum_{s''}\sum_{s'}  \pi(s' \mid  H_{t-1}) \mathcal{P}(s'' \mid  s',a) \Phi(y \mid  s'')}, 
    \end{align}
where $H_{t-1} = H_t \setminus \{y_t, a_t\}$. Therefore, any belief state $\boldsymbol{\pi}(H)$ can be calculated recursively from the history $H$ and using the belief update rule \eqref{Eq: belief update} with the belief at $t=0$ given by $\boldsymbol{\pi}_0.$ 
\item The agent's goal is to learn the optimal policy (defined more precisely in the next subsection) through sequential interactions with the environment.  
\end{itemize}

\subsection{Belief State MDP Representation of the POMDP}

We note that a POMDP can be reduced to a fully observed MDP by considering the belief states as the states of the MDP. This is discussed in detail below:
\begin{itemize}
 \item For any belief state $\boldsymbol{\pi}$, let $r(\boldsymbol{\pi},a) := \sum_s \pi(s) r(s,a)$. Thus, the POMDP reduces to a fully observed MDP where $(\boldsymbol{\pi},a)\in \mathbb{B}\times \mathcal{A}$ represents a state-action pair and $r(\boldsymbol{\pi},a)$ is the corresponding reward for that state-action pair with the state transition law given by Eq. \eqref{Eq: belief update}. 
    \item For any belief state $\boldsymbol{\pi} \in \mathbb{B}$ and any policy $\mu$, we define the value function $V^{\mu}(\boldsymbol{\pi})$ as
    \begin{equation}
        V^{\mu}(\boldsymbol{\pi}) := \mathbb{E}\big[\sum_{t=0}^{\infty} \gamma^t r(\boldsymbol{\pi}_t,a_t)  \mid   \boldsymbol{\pi}_0 = \boldsymbol{\pi}\big],
    \end{equation}
    where the belief states $\boldsymbol{\pi}_t:=\boldsymbol{\pi}(H_t)$ evolve using the update rule \eqref{Eq: belief update}, when the actions are taken according to $a_t \sim \mu(a \mid  \boldsymbol{\pi}_t) \ \forall t\ge 0$ with the initial belief state $\boldsymbol{\pi}_0 = \boldsymbol{\pi}$. Here, $\gamma\in [0,1)$ is a discount factor. Our goal is to find an optimal policy $\mu^*(a\mid\boldsymbol{\pi})$ corresponding to each belief state $\boldsymbol{\pi}$ to maximize the expected cumulative discounted rewards $V^{\mu}(\boldsymbol{\pi}).$
    \item For any belief state $\boldsymbol{\pi} \in \mathbb{B}$ and for a policy $\mu$, we define the Q-value function as
    \begin{equation}\nonumber
        Q^{\mu}(\boldsymbol{\pi},a) := \mathbb{E}\big[\sum_{t=0}^{\infty} \gamma^t r(\boldsymbol{\pi}_t,a_t)  \mid   \boldsymbol{\pi}_0 = \boldsymbol{\pi},a_0 = a\big],
    \end{equation}
        where the belief states $\boldsymbol{\pi}_t$ evolve using the update rule  \eqref{Eq: belief update} and the actions taken according to $a_t \sim \mu(a \mid  \boldsymbol{\pi}_t)\ \forall t\geq 1.$
    \end{itemize}

\subsection{Bellman's Optimality and Uniqueness of Solution}

Here, we first present a result from \cite{Krishnamurthy_2016} showing that the optimal value function satisfies \emph{Bellman's Optimality Equation}, which can also be used to establish the existence and uniqueness of an optimal solution to the POMDP.
\begin{theorem}
\label{thm: optimality}
    Consider an infinite horizon discounted reward POMDP with discount factor $\gamma \in [0,1)$, and finite state and action spaces $\mathcal{S}$ and $\mathcal{A}$, respectively. Then, 
    \begin{enumerate}
        \item For any belief state $\boldsymbol{\pi} \in \mathbb{B}$, the optimal expected cumulative discounted reward $\max_\mu V^{\mu}(\boldsymbol{\pi})$ is achieved by a stationary deterministic Markovian policy $\mu^*$.
        \item For any belief state $\boldsymbol{\pi}(H) \in \mathbb{B}$, the optimal policy $\mu^*$ and the optimal value function $V^{\mu*}$ satisfy the \emph{Bellman's Optimality Equation:} \footnote{The symbol $\,\|\,$ denotes cocatenation. For example if $H = \{y_0,a_0,y_1,a_1\},$ then, $H \,\|\, \{y,a\} = \{y_0,a_0,y_1,a_1,y,a\}$}
        \begin{align}\label{eq:belman-equation}
            V^{\mu^*}\big(\boldsymbol{\pi}(H)\big) = \max_{a\in \mathcal{A}}\Big[\sum_{s\in \mathcal{S}} \boldsymbol{\pi}(s|H) r(s,a) + \gamma \sum_{y\in \mathcal{Y}} V^{\mu^*}\big(\boldsymbol{\pi}(H \,\|\, \{y,a\})\big) \sigma(\boldsymbol{\pi}(H),y,a) \Big],
        \end{align}
        where $\sigma(\boldsymbol{\pi}(H),y,a)$ denotes the probability of the observation being $y$ conditioned on the previous belief state being $\boldsymbol{\pi}(H)$ and action $a$ is taken, which can be calculated explicitly as $\sigma(\boldsymbol{\pi}(H),y,a) = \sum_{s,s'} \Phi(y \mid  s') \mathcal{P}(s' \mid  s,a) \pi(s|H)$.
        \item There always exists a unique solution to the Bellman Optimality Equation \eqref{eq:belman-equation}.
    \end{enumerate}
\end{theorem}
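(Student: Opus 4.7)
The plan is to recognize that once the POMDP is recast as a fully observed belief MDP over state space $\mathbb{B}\subseteq\Sigma(\mathcal{S})$ with finite action set $\mathcal{A}$, finite observation set $\mathcal{Y}$, and bounded rewards $|r(s,a)|\le\bar r$, all three claims reduce to a standard Banach fixed-point argument applied to the Bellman optimality operator. The key is that $\mathbb{B}$ is continuous, but bounded-function spaces endowed with the sup-norm still form a Banach space, so the usual MDP machinery carries over.

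First, I would define the Bellman operator $T$ on the Banach space of bounded measurable functions $V:\mathbb{B}\to\mathbb{R}$ with the sup-norm:
\begin{equation*}
(TV)(\boldsymbol{\pi}(H)) := \max_{a\in\mathcal{A}}\Big[\sum_{s\in\mathcal{S}} \pi(s\mid H)\,r(s,a) + \gamma\sum_{y\in\mathcal{Y}} V\big(\boldsymbol{\pi}(H\,\|\,\{y,a\})\big)\,\sigma(\boldsymbol{\pi}(H),y,a)\Big].
\end{equation*}
I would then verify two properties: (i) $T$ preserves boundedness, since if $\|V\|_\infty\le M$ then $\|TV\|_\infty\le\bar r+\gamma M$, so $T$ maps the closed ball of radius $\bar r/(1-\gamma)$ into itself; and (ii) $T$ is a $\gamma$-contraction, using the elementary inequality $|\max_a f_a-\max_a g_a|\le\max_a|f_a-g_a|$ combined with $\sum_{y}\sigma(\boldsymbol{\pi},y,a)=1$. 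Applying the Banach fixed-point theorem then yields a unique fixed point $V^*$, which immediately establishes Part 3.

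For Parts 1 and 2, I would define $\mu^*(\boldsymbol{\pi})\in\arg\max_{a\in\mathcal{A}}\bigl[\sum_s\pi(s)r(s,a)+\gamma\sum_y V^*(\boldsymbol{\pi}'(y,a))\sigma(\boldsymbol{\pi},y,a)\bigr]$, where the argmax exists by finiteness of $\mathcal{A}$; this gives a stationary deterministic Markovian selector. One then introduces the policy-evaluation operator $T^{\mu^*}$ (same expression without the max), observes that $T^{\mu^*}V^*=TV^*=V^*$ by construction of $\mu^*$, and invokes the fact that $T^{\mu^*}$ is itself a $\gamma$-contraction with unique fixed point $V^{\mu^*}$, forcing $V^{\mu^*}=V^*$. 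Finally, for any (possibly history-dependent, randomized) policy $\mu$, the inequality $T^\mu V^*\le TV^*=V^*$ together with iteration of $T^\mu$ and taking the limit gives $V^\mu\le V^*$ pointwise, so $V^*=\max_\mu V^\mu$ as required.

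The main obstacle is conceptual rather than technical: one must justify that restricting attention to Markovian policies on $\mathbb{B}$ loses no optimality against the broader class of history-dependent policies, which rests on $\boldsymbol{\pi}(H)$ being a sufficient statistic for $H$ under the POMDP dynamics. Related measurability issues—that $V^*$ and the argmax selector $\mu^*$ are measurable in $\boldsymbol{\pi}$—are routine, since the belief update map in \eqref{Eq: belief update} is continuous in $\boldsymbol{\pi}$ and $\mathcal{A},\mathcal{Y},\mathcal{S}$ are all finite. Since the result is classical and quoted from \cite{Krishnamurthy_2016}, I would only need to sketch these steps rather than reprove the full MDP dynamic-programming theory from scratch.
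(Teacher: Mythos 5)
The paper itself offers no proof of this theorem: it is quoted verbatim as a known result from \cite{Krishnamurthy_2016}, so there is no in-paper argument to compare yours against. Your sketch is the standard textbook proof and is correct in outline: the Bellman operator on the space of bounded functions over $\mathbb{B}$ with the sup-norm is a $\gamma$-contraction (via $|\max_a f_a - \max_a g_a|\le \max_a |f_a-g_a|$ and $\sum_y \sigma(\boldsymbol{\pi},y,a)=1$), Banach's fixed-point theorem gives existence and uniqueness (Part 3), the greedy selector over the finite action set gives a stationary deterministic policy with $T^{\mu^*}V^*=TV^*=V^*$, and $T^{\mu}V^*\le V^*$ iterated gives optimality among belief-Markov policies (Parts 1 and 2). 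The one step your contraction machinery does not supply — and the part that carries the real content of Part 1 — is the reduction from arbitrary history-dependent (possibly randomized) POMDP policies to policies that are Markovian in the belief, i.e.\ that $\boldsymbol{\pi}(H)$ is a sufficient statistic: the conditional reward and the conditional law of the next belief given the history depend on $H$ only through $\boldsymbol{\pi}(H)$, so the value of any history-dependent policy is dominated by $V^*$. You correctly flag this as the conceptual obstacle and defer it; since the paper treats the whole theorem as a citation, that level of detail is appropriate, but a complete proof would have to spell out that conditional-independence computation (it is exactly where the Bayes update \eqref{Eq: belief update} and the definition of $\sigma(\boldsymbol{\pi}(H),y,a)$ enter) rather than rest only on the fixed-point argument.
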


Let us denote the Bellman optimality operator in \eqref{eq:belman-equation} by $T(\cdot)$. Also, let $\mu^*$ be the optimal policy and $V^* = V^{\mu^*}$ be the optimal value function. Then, the optimal value function $V^*$ satisfies the fixed-point equation 
\begin{align}\nonumber
    V^* = T(V^*).
\end{align}

Therefore, using Theorem \ref{thm: optimality}, for any belief state $\boldsymbol{\pi}=\boldsymbol{\pi}(H) \in \mathbb{B}$, we can write
\begin{align*}
    V^*\big( \boldsymbol{\pi}\big)= \max_{a\in \mathcal{A}} \Big[r( \boldsymbol{\pi},a)+ \gamma \sum_{y\in \mathcal{Y}} V^*\big( \boldsymbol{\pi}(H\,\|\,\{a,y\})\big) \sigma(\boldsymbol{\pi},y,a)  \Big].
\end{align*}

Note that reducing the POMDP to an MDP using the belief-state formulation does not solve the problem of finding the optimal policy using reinforcement learning. This is because the belief states depend on the history, whose length increases with each time step. In the following section, we present a scalable approach to bypass this intractability in solving infinite horizon POMDPs.

\section{Algorithm and Theoretical Results}

In this section, we describe our solution approach, develop an algorithm for approximately solving POMDPs, and provide theoretical guarantees on the performance of the algorithm.

\subsection{Defining an Approximate MDP Using Finite Observations}
\begin{definition}
    Given a POMDP and a fixed constant $l \in \mathbb{N}$, let $\mathbb{H}_{\leq l}$ be the set of all histories of length at most $l$. We refer to any element of $\mathbb{H}_{\leq l}$ as a \textit{Superstate}. Moreover, we define $\mathcal{G}: \mathbb{H} \rightarrow \mathbb{H}_{\leq l}$ as the grouping operator if for any finite-length history $H \in \mathbb{H}$, it returns the \textit{superstate} obtained by truncating $H$ to its last $l$ action-observation elements. In particular, if $H_t= \big\{a_0,y_1,a_1,\ldots,a_{t-1},y_t\}, \mbox{then}$\footnote{By notation $y_{r:t}=\{y_r,y_{r+1},\ldots,y_t\}$ and $a_{r:t}=\{a_r,a_{r+1},\ldots,a_t\}$ for any integers $r\leq t$.}
    \begin{align}\nonumber
        \mathcal{G}(H_t) = \big\{y_{\max\{1,t-l+1\}:t}, a_{\max\{0,t-l\}:t-1}\big\}, \quad \forall \ t \geq 1.
    \end{align}
\end{definition}

Next, we construct an MDP which consists of states corresponding to the Superstates. Since the number of Superstates is finite, our resulting Superstate MDP is a finite-state MDP, which can then be solved using existing RL techniques for finite-state MDPs in the literature. The Superstate MDP is defined as follows. For every pair of Superstates $B, B' \in \mathbb{H}_{\leq l}$, define
\begin{align}
    \label{Eq: superstate reward}
    \tilde{r}(B,a) &= \sum_{s} \pi(s \mid B) \cdot r(s,a),  \\
    \tilde{\mathcal{P}}(B'\mid B,a) &= \sum_{y,s,s'} \mathbb{I}[\mathcal{G}\big(B \,\|\, \{y,a\}\big) = B']\cdot \Phi(y\mid s')\mathcal{P}(s' \mid s,a) \pi(s \mid B), \label{Eq: superstate transition prob}
\end{align}
where $\mathbb{I}[\cdot]$ is the logic indicator function and $B \,\|\, \{y,a\}$ means that the action-observation pair $\{y,a\}$ is concatenated to the end of the superstate $B$. Since the action and state spaces are bounded, an optimal value function always exists for a discounted MDP with bounded rewards. Let \(\tilde{V}(\cdot)\) be the optimal value function for the Superstate MDP, which satisfies the following fixed-point equation
\begin{align}\nonumber
    \tilde{V} = \tilde{T}(\tilde{V}),
\end{align}
where $\tilde{T}$ is the Bellman's optimality operator for the corresponding Superstate MDP. Therefore, for any superstate $B\in \mathbb{H}_{\leq l}$, we have
\begin{align}\nonumber
    \tilde{T}(\tilde{V}\big(B)\big) 
    = \max_{a\in \mathcal{A}} \Big [ \tilde{r}(B,a) + \gamma \sum_{B'}  \tilde{\mathcal{P}}(B' \mid  B,a) \tilde{V}\big(B'\big) \Big].
\end{align}

Now, for a policy $\mu$ that is stationary with respect to the superstates, let us define the Bellman's operator $\tilde{T}^{\mu}$ by
\begin{align}\nonumber
    \tilde{T}^{\mu}(V)(B)
    = \mathbb{E}_{a \sim \mu(\cdot \mid B)}\Big [ \tilde{r}(B,a) + \gamma \sum_{B'} \tilde{\mathcal{P}}(B' \mid  B,a) V(B') \Big] \ \ \forall B \in \mathbb{H}_{\leq l}.   
\end{align}
Then, the value function of the Superstate MDP with respect to policy $\mu$, denoted by $\tilde{V}^{\mu}$, must satisfy the fixed-point Bellman's operator, i.e.,  
\begin{equation}\nonumber
    \tilde{V}^{\mu} = \tilde{T}^{\mu}(\tilde{V}^{\mu}).
\end{equation}

\subsection{How good is the approximate MDP?}

In this section, we compare optimal value function of the POMDP, denoted by $V^*$, with optimal value function of the Superstate MDP, denoted by $\tilde{V}$. We will show that error between these two optimal value functions decays exponentially with the length of the truncated history $l$. To that end, we first state a standard assumption in filtering theory, i.e., the \emph{Uniform Filter Stability Condition}, also used in prior work \cite{van2008hidden}, which would be needed for our analysis to relate $V^*$ and $\tilde{V}$. This condition
ensures sufficient mixing, preventing the system from being trapped in a subset of states.

\begin{assumption}\label{as:UFSC}[Uniform Filter Stability Condition]
Given any $\pi,\pi' \in \Sigma(\mathcal{S})$, and any $a,y$, let $K_{a,y}$ be an operator such that
\begin{equation*}
(K_{a,y} \otimes v) (s)=  \frac{\sum_{s'} v(s') \mathcal{P}(s \mid  s',a) \Phi(y \mid  s)}{\sum_{s''}\sum_{s'}  v(s') \mathcal{P}(s'' \mid  s',a) \Phi(y \mid  s'')}
\end{equation*}
Then, there exists $\rho\in (0,1)$ such that for any $\pi,\pi' \in \Sigma(\mathcal{S})$, we have\footnote{$\|\cdot\|_{TV}$ is the total variation norm}
    \begin{align*}
         \| K_{a,y} \otimes \pi - K_{a,y} \otimes \pi' \|  _{TV} \leq (1-\rho) \cdot  \|\pi-\pi'\|_{TV} \ \ \ \forall a\in \mathcal{A}, y\in \mathcal{Y}.
    \end{align*}
\end{assumption}

\subsection{Intuition about Assumption \ref{as:UFSC}}
Assumption \ref{as:UFSC}, i.e., the Uniform Filter Stability Condition means that every new observation sufficiently informs the agent to reduce differences between any two prior beliefs — ensuring the belief state “forgets” initial uncertainty and the filtering process is well-behaved.

Now to ensure that the filter stability condition holds we need

1) \textbf{Sufficiently Mixing State Transitions}: The transition kernel $\mathcal{P}(s'\mid s,a)$ should be mixing, meaning from any state s, there's a positive probability to reach many other states s' over time. 

2) \textbf{Non-deterministic and Informative Observations}: The observation kernel $\phi(y\mid a)$ must be non-deterministic but sufficiently informative, i.e., observations must have some noise or randomness.

To check whether the filter stability condition holds, one sufficient condition uses the Dobrushin Coefficients of the Transition kernel and the Observation Kernel. 

From Theorem 5 of \cite{kara_dob}, if $(1-\delta(\mathcal{P}))(1-\delta(\phi)) < 1$, then the filter stability condition holds. Here $\delta(\mathcal{P})$ is the minimum Dobrushin coeffient of the Transition kernel across all possible actions and $\delta(\phi)$ is the Dobrushin coefficient of the Observation kernel.

Let \( P \) be a row-stochastic matrix over a finite state space \( \mathbb{S} \). The Dobrushin coefficient \( \delta(P) \) is defined as:
\[
\delta(P(.\mid.,a)) = \inf_{x, y \in \mathbb{S}} \sum_{z \in \mathbb{S}} \min\left( P(z\mid x,a), P(z \mid y,a) \right)
\]
This quantifies the minimum overlap between any two rows of the matrix \( P \).

Therefore, this quantity measures how similar the rows of the matrix are; smaller values indicate more mixing and hence, stronger contraction properties. As a result, when there is sufficient mixing—i.e., when every state has a non-trivial probability of transitioning to several other states or when every observation can be generated from multiple underlying states—the system exhibits filter stability.

This scenario is common in highly noisy or dynamic environments, where the belief update is less sensitive to the prior and more influenced by new observations. We also present in the Appendix \ref{toy-example} a model for a practical example where Assumption \ref{as:UFSC} holds. 

Additionally, for applications that do not satisfy Assumption \ref{as:UFSC}, one could consider a multi-step variant where the system exhibits contraction after every $k$ steps. We believe our results could be extended under this weaker assumption, making it a promising direction for future work.

Now, using this Uniform Filter Stability Condition, we prove Lemma \ref{lemma: belief approx} \footnote{Proof of this lemma, as well as all subsequent lemmas and theorems, can be found in the supplementary document}, which shows that all belief states corresponding to the same Superstate are close to each other in terms of total variation distance. Further, the distance decays exponentially fast with the length of the truncated history. 
\begin{lemma}
\label{lemma: belief approx}
Let $ H$ and $ H'$ be two different histories corresponding to the same superstate, i.e., $\mathcal{G}( H) = \mathcal{G}( H')$. Then, under Assumption \ref{as:UFSC}, we have
\begin{equation}
\label{Eq: beliefs approx}
     \big\|   \boldsymbol{\pi}(H) -  \boldsymbol{\pi}(H') \big\|  _{TV} \leq (1-\rho)^l.
\end{equation}
\end{lemma}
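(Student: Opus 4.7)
The plan is to exploit Assumption \ref{as:UFSC}, which states that the single-step Bayesian filter operator $K_{a,y}$ is a strict $(1-\rho)$-contraction in total variation. Since $\mathcal{G}(H)=\mathcal{G}(H')$ means that $H$ and $H'$ agree on their last $l$ action-observation pairs, the belief states $\boldsymbol{\pi}(H)$ and $\boldsymbol{\pi}(H')$ can be written as the result of applying the same $l$ filter operators to two (possibly different) earlier beliefs. Iterating the contraction $l$ times will then yield the $(1-\rho)^l$ factor.

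First, I would handle the trivial case: if $|H|<l$, then $\mathcal{G}(H)=H$ and the equality of superstates forces $H=H'$, making the bound vacuous. So assume $|H|,|H'|\geq l$, and write $H = H_0 \,\|\, \{a_1,y_1\} \,\|\, \cdots \,\|\, \{a_l,y_l\}$ and $H' = H'_0 \,\|\, \{a_1,y_1\} \,\|\, \cdots \,\|\, \{a_l,y_l\}$, where the shared suffix is precisely the common superstate. From the Bayes update rule \eqref{Eq: belief update} and the definition of $K_{a,y}$ in Assumption \ref{as:UFSC}, I would verify that
\begin{align*}
\boldsymbol{\pi}(H) \;=\; K_{a_l,y_l} \otimes K_{a_{l-1},y_{l-1}} \otimes \cdots \otimes K_{a_1,y_1} \otimes \boldsymbol{\pi}(H_0),
\end{align*}
and analogously $\boldsymbol{\pi}(H')$ is obtained by applying the same composition of operators to $\boldsymbol{\pi}(H'_0)$.

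Next, I would apply Assumption \ref{as:UFSC} inductively. Each of the $l$ operator applications shrinks the total variation distance by a factor of $(1-\rho)$, so a straightforward induction gives
\begin{align*}
\|\boldsymbol{\pi}(H)-\boldsymbol{\pi}(H')\|_{TV} \;\leq\; (1-\rho)^l\, \|\boldsymbol{\pi}(H_0)-\boldsymbol{\pi}(H'_0)\|_{TV}.
\end{align*}
Finally, using that the total variation distance between any two probability distributions is at most $1$, we conclude $\|\boldsymbol{\pi}(H)-\boldsymbol{\pi}(H')\|_{TV}\leq (1-\rho)^l$.

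The main obstacle I foresee is a mostly notational one: verifying precisely that the belief update formula \eqref{Eq: belief update} corresponds to the operator $K_{a,y}$ in Assumption \ref{as:UFSC}, and ensuring that the order of composition matches the temporal order in which the action-observation pairs appear in the shared suffix. Once that identification is made, the contraction argument is essentially a one-line induction, and the rest of the proof is bookkeeping on the edge cases where the histories are shorter than $l$.
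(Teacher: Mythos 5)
Your proposal is correct and follows essentially the same route as the paper's proof: express both belief states as the same composition of $l$ filter operators $K_{a_i,y_i}$ applied to the two earlier beliefs, apply the contraction of Assumption \ref{as:UFSC} inductively, and bound the initial total variation distance by $1$. The only addition is your explicit treatment of the short-history edge case, which the paper leaves implicit.
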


If two histories belong to the same superstate, their sequences of $(\text{action}, \text{observation})$ pairs over the past $l$ steps are identical. Lemma \ref{lemma: belief approx} leverages the intuition that sufficiently informative observations allow the recent history to capture the current system state. Consequently, starting from different initial beliefs, the combination of informative observations and strong mixing ensures that, after enough time, the resulting belief states converge to similar distributions.

Next, we establish an algebraic inequality that will be used to obtain tighter optimality bounds.

\begin{lemma}
\label{lemma: norm bound}
    Let $\boldsymbol{a}, \boldsymbol{b}, \boldsymbol{c}, \boldsymbol{d} \in \mathbb{R}^{m}_+$ be positive vectors such that $\sum_{i=1}^m a_i = \sum_{i=1}^m c_i = 1$. Then,
    \begin{equation}
         \big|\sum_{i=1}^m a_i b_i - \sum_{i=1}^m c_i d_i \big|   \leq \frac{ \|  a-c \|  _1}{2} \max( \|  b \|  _{\infty}, \|  d \|  _{\infty}) 
        +  \|  b-d \|  _{\infty} - \frac{ \|  a-c \|  _1}{4}   \|  b-d \|  _{\infty}. 
    \end{equation}
\end{lemma}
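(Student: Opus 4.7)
The strategy is to decompose the probability vectors $\boldsymbol{a}$ and $\boldsymbol{c}$ into their common (overlap) mass and their disjoint excess masses, then exploit the non-negativity of $\boldsymbol{b}$ and $\boldsymbol{d}$. Set $\alpha := \|\boldsymbol{a}-\boldsymbol{c}\|_1/2$ and $\ell_i := \min(a_i, c_i)$, so that $a_i = \ell_i + (a_i - c_i)^+$ and $c_i = \ell_i + (c_i - a_i)^+$. Since $\sum_i (a_i - c_i)^+ = \sum_i (c_i - a_i)^+ = \alpha$ and these positive parts have disjoint supports, it follows that $\sum_i \ell_i = 1 - \alpha$. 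Substituting into $\sum_i a_i b_i - \sum_i c_i d_i$ yields the identity
\begin{equation*}
\sum_{i=1}^m a_i b_i - \sum_{i=1}^m c_i d_i = \sum_{i=1}^m \ell_i (b_i - d_i) + \sum_{i=1}^m (a_i - c_i)^+ b_i - \sum_{i=1}^m (c_i - a_i)^+ d_i.
\end{equation*}

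The first sum is a sub-convex combination of $b_i - d_i$ with non-negative weights summing to $1 - \alpha$, so $\bigl|\sum_i \ell_i (b_i - d_i)\bigr| \leq (1-\alpha)\|b-d\|_\infty$. For the remaining ``residual'' pair, the crucial observation is that since $\boldsymbol{b}, \boldsymbol{d} \geq 0$, both $\sum_i (a_i - c_i)^+ b_i$ and $\sum_i (c_i - a_i)^+ d_i$ are individually non-negative, bounded above by $\alpha \|b\|_\infty$ and $\alpha \|d\|_\infty$ respectively. Their difference therefore lies in $[-\alpha \|d\|_\infty, \alpha \|b\|_\infty]$, so its absolute value is at most $\alpha \max(\|b\|_\infty, \|d\|_\infty)$.

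Combining the two estimates gives $\bigl|\sum_i a_i b_i - \sum_i c_i d_i\bigr| \leq \alpha \max(\|b\|_\infty, \|d\|_\infty) + (1-\alpha)\|b-d\|_\infty$, which is in fact slightly sharper than the stated inequality. The claimed bound then follows by the trivial relaxation $(1-\alpha) \leq (1-\alpha/2)$ and the substitution $\alpha = \|\boldsymbol{a}-\boldsymbol{c}\|_1/2$, converting the $(1-\alpha)\|b-d\|_\infty$ term into $\|b-d\|_\infty - \frac{\|\boldsymbol{a}-\boldsymbol{c}\|_1}{4}\|b-d\|_\infty$ as required.

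The main obstacle is the residual estimate $\bigl|\sum_i (a_i - c_i)^+ b_i - \sum_i (c_i - a_i)^+ d_i\bigr| \leq \alpha \max(\|b\|_\infty, \|d\|_\infty)$. A naive triangle inequality applied to each summand separately yields only $\alpha(\|b\|_\infty + \|d\|_\infty)$, essentially doubling the correct estimate; recovering $\max$ in place of the sum requires retaining the signs in the decomposition and using the non-negativity of $\boldsymbol{b}$ and $\boldsymbol{d}$ to argue that the two summands cannot simultaneously attain their extremes with cancelling signs. This sign-preservation step is precisely what distinguishes the lemma from the standard triangle-inequality-based bound and will be the source of the tighter guarantees claimed in the introduction.
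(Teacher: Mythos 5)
Your proof is correct, and it takes a genuinely different route from the paper's. The paper starts from the pointwise algebraic identity $a_i b_i - c_i d_i = \tfrac{1}{2}(a_i-c_i)(b_i+d_i) + \tfrac{1}{2}(b_i-d_i)(a_i+c_i)$, then splits the index set according to the sign of $a_i - c_i$ and manipulates the two partial sums (using $\sum_{i:a_i\ge c_i}(a_i-c_i)=\sum_{i:c_i>a_i}(c_i-a_i)=\|a-c\|_1/2$) until the stated bound emerges. You instead use the overlap/excess-mass (coupling-style) decomposition $a_i=\min(a_i,c_i)+(a_i-c_i)^+$, $c_i=\min(a_i,c_i)+(c_i-a_i)^+$, bound the common-mass term by $(1-\alpha)\|b-d\|_\infty$ with $\alpha=\|a-c\|_1/2$, and bound the two residual sums by the one-sided interval argument $X-Y\in[-\alpha\|d\|_\infty,\alpha\|b\|_\infty]$, which is exactly where the non-negativity of $\boldsymbol{b},\boldsymbol{d}$ enters and where the $\max$ replaces the naive sum. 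The payoff of your route is twofold: the argument is shorter and structurally transparent (no sign case split, no WLOG on the overall sign), and it actually yields the strictly sharper coefficient $(1-\alpha)$ in front of $\|b-d\|_\infty$, compared with the paper's $(1-\alpha/2)$; the lemma as stated then follows by the trivial relaxation you note. The paper's computation, on the other hand, delivers the inequality exactly in the form used downstream (in Theorem \ref{thm: main result} and the refinement of the bounds of \cite{mahajan}) without any post-processing, but it is not tighter. Your identity and the verification $\sum_i\min(a_i,c_i)=1-\alpha$ are both correct, so there is no gap.
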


Finally, using the above lemmas, we can state one of our main results.
\begin{theorem}
\label{thm: main result}
Let \( V^* \) be the optimal value function corresponding to the POMDP (or the belief state MDP), and let \( \tilde{V} \) be the optimal value function corresponding to the Superstate MDP. Then, under Assumption \ref{as:UFSC} for every history \( H \in \mathbb{H} \) with the corresponding belief state \( \boldsymbol{\pi}(H) \in \mathbb{B} \) and the superstate \( \mathcal{G}(H) \in \mathbb{H}_{\leq l} \), we have
\begin{equation}
\label{eq: value fn bound}
     \big\|   V^*\big(\boldsymbol{\pi}(H) \big) - \tilde{V}\big(\mathcal{G}(H) \big) \big\|  _{\infty} 
    \leq \frac{2\bar{r}(1-\rho)^l  }{1-\gamma}  + \frac{ 2 \bar{r} \gamma (1-\rho)^l }{(1-\gamma)\big((1-\gamma) + \gamma(1-\rho)^l \big)} 
    := \xi^{\textrm{SMDP}}_{\textrm{POMDP}}.
\end{equation} 
\end{theorem}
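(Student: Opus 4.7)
}
Define $D_l := \sup_{H \in \mathbb{H}} \big| V^*(\boldsymbol{\pi}(H)) - \tilde V(\mathcal{G}(H)) \big|$, which is finite because both value functions are bounded by $\bar r/(1-\gamma)$. The plan is to use the Bellman optimality equations for the POMDP and for the Superstate MDP to derive a recursive inequality for $D_l$, and then solve that inequality.

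First, I would fix an arbitrary history $H$ with superstate $B=\mathcal{G}(H)$, and write both $V^*(\boldsymbol{\pi}(H))$ and $\tilde V(B)$ in their Bellman forms. Observing that $\tilde{\mathcal{P}}(\cdot\mid B,a)$ is just the image of $\sigma(\boldsymbol{\pi}(B),\cdot,a)$ under the map $y \mapsto \mathcal{G}(B\,\|\,\{y,a\})$, and using the crucial identity $\mathcal{G}(H\,\|\,\{y,a\})=\mathcal{G}(B\,\|\,\{y,a\})$, the Bellman equations become aligned index-by-index over $y\in\mathcal Y$. Applying the standard $|\max f - \max g|\le \max|f-g|$ inequality reduces the bound to a single action $\hat a$, leaving two differences to control: a \emph{reward} term $\sum_s (\pi(s\mid H)-\pi(s\mid B))\,r(s,\hat a)$, and a discounted \emph{continuation} term $\gamma\sum_y \big[ V^*(\boldsymbol{\pi}(H\,\|\,\{y,\hat a\}))\,\sigma(\boldsymbol{\pi}(H),y,\hat a) - \tilde V(\mathcal{G}(H\,\|\,\{y,\hat a\}))\,\sigma(\boldsymbol{\pi}(B),y,\hat a)\big]$.

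Next I would apply Lemma \ref{lemma: norm bound} twice. For the reward term, take $\boldsymbol a = \boldsymbol{\pi}(H)$, $\boldsymbol c = \boldsymbol{\pi}(B)$ and $\boldsymbol b = \boldsymbol d = r(\cdot,\hat a)$; since $\|\boldsymbol b - \boldsymbol d\|_\infty = 0$, Lemma \ref{lemma: norm bound} combined with Lemma \ref{lemma: belief approx} yields a bound of at most $\bar r (1-\rho)^l$. For the continuation term, apply Lemma \ref{lemma: norm bound} with $a_y=\sigma(\boldsymbol{\pi}(H),y,\hat a)$, $c_y=\sigma(\boldsymbol{\pi}(B),y,\hat a)$, $b_y = V^*(\boldsymbol{\pi}(H\,\|\,\{y,\hat a\}))$, $d_y = \tilde V(\mathcal{G}(H\,\|\,\{y,\hat a\}))$. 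Here $\|\boldsymbol b\|_\infty,\|\boldsymbol d\|_\infty \le \bar r/(1-\gamma)$, and $\|\boldsymbol b - \boldsymbol d\|_\infty \le D_l$. A short calculation using the observation kernel shows $\|\boldsymbol a - \boldsymbol c\|_1 \le \|\boldsymbol{\pi}(H)-\boldsymbol{\pi}(B)\|_1 \le 2(1-\rho)^l$ (via Lemma \ref{lemma: belief approx}). Substituting everything in, one obtains an inequality of the form
\begin{equation*}
D_l \;\le\; \bar r (1-\rho)^l + \gamma \Big[(1-\rho)^l\, \tfrac{\bar r}{1-\gamma} + D_l\big(1 - \tfrac{(1-\rho)^l}{2}\big)\Big].
\end{equation*}

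Finally, I would collect the $D_l$ terms on the left-hand side to get $D_l\big[(1-\gamma)+\gamma(1-\rho)^l/2\big]$ on the left and a $(1-\rho)^l$-scaled quantity on the right, then divide through and split the resulting fraction into two pieces matching the two summands stated in the theorem. The main obstacle I anticipate is not the high-level structure but the careful bookkeeping in the application of Lemma \ref{lemma: norm bound}: the subtractive term $-\tfrac{\|\boldsymbol a-\boldsymbol c\|_1}{4}\|\boldsymbol b-\boldsymbol d\|_\infty$ is what allows the coefficient of $D_l$ in the recursion to drop strictly below $\gamma$, and this is precisely the saving over a naive triangle-inequality argument that produces the extra $(1-\gamma)+\gamma(1-\rho)^l$ factor in the denominator. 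Verifying that the worst-case choice of $\|\boldsymbol a-\boldsymbol c\|_1$ in $[0,2(1-\rho)^l]$ is indeed the upper endpoint (requiring $D_l \le 2\bar r/(1-\gamma)$, which is trivial) and then algebraically recasting the resulting fraction into the two-summand form $\xi^{\mathrm{SMDP}}_{\mathrm{POMDP}}$ is the most delicate step.
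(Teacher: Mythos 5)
Your proposal is correct and follows essentially the same route as the paper's own proof: bound the reward mismatch via Lemma \ref{lemma: belief approx}, apply Lemma \ref{lemma: norm bound} to the discounted continuation term with $\|\boldsymbol b-\boldsymbol d\|_\infty\le D_l$ and $\max(\|\boldsymbol b\|_\infty,\|\boldsymbol d\|_\infty)\le \bar r/(1-\gamma)$, check the worst-case endpoint of $\|\boldsymbol a-\boldsymbol c\|_1\in[0,2(1-\rho)^l]$ (the paper's two-case argument), and solve the resulting recursion in $D_l$. The only deviations are cosmetic ($|\max f-\max g|\le\max|f-g|$ versus fixing the POMDP-optimal action) and a constant: your reward-term bound $\bar r(1-\rho)^l$ invokes Lemma \ref{lemma: norm bound}'s positivity hypothesis, which fails for signed rewards $|r|\le\bar r$; using the direct bound $2\bar r(1-\rho)^l$ as in the paper, your recursion reproduces exactly the stated $\xi^{\textrm{SMDP}}_{\textrm{POMDP}}$.
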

Thus, the above result relates the optimal value functions of the POMDP and the Superstate MDP. Note that the difference between the two value functions decreases exponentially with the length of the truncated history \( l \), with the error effectively becoming \( 0 \) as \( l \to \infty \). Next, we propose an algorithm to learn the optimal policy corresponding to the Superstate MDP.


\subsection{An Approximate Policy Optimization Algorithm to Learn the Optimal Policy corresponding to the Superstate MDP}
To learn the optimal value function for the Superstate MDP, one might consider standard reinforcement learning techniques such as Policy Optimization, which involves alternately learning the Q-function for a fixed policy and updating the policy. However, this learning process is not straightforward because the samples obtained at any time $t$ correspond to the actual belief state \( \boldsymbol{\pi}(H_t) \), rather than the Superstate \( \mathcal{G}(H_t) \). These issues due to sampling mismatch make the analysis of the TD-learning part of the algorithm non-trivial.

Additionally, we also consider the linear function approximation setting, where, given feature set $\Phi = \{\phi(B,a) \in \mathbb{R}^d: B \in \mathbb{H}_{\leq l}, \ a \in \mathcal{A} \},$ we aim to find the best $\theta \in \mathcal{B}(R)$\footnote{$\mathcal{B}(R)$ denotes a $d$-dimensional Euclidean ball of radius $R$.} for some $R > 0$, such that $Q(B,a) = \phi^T(B,a) \theta.$\footnote{Here, the superscript $T$ refers to the transpose of the vector $\phi$. } Further, since the feature vectors are bounded, we assume, without loss of generality, that $\|\phi(B,a)\|_2 \leq 1.$ Note that for the function approximation, we constrain the parameters \( \theta \) to lie within a ball of finite radius \( R \). This is done to simplify the analysis of the function approximation part of the algorithm. The case without such a projection can be analyzed as in \cite{srikant-ying,mitra2024simple}.
\begin{algorithm}\caption{An Approximate TD Learning Algorithm for Superstate MDP}\label{alg: TD algo}
\begin{algorithmic}
 \State\textbf{Input:} A fixed policy $\mu(\cdot \mid B)$, which is stationary with respect to the Superstates $B \in \mathbb{H}_{\leq l},$ 
\State Discount factor $\gamma$, projection radius $R > 0$, stepsize sequence $\{\epsilon_t\}$, total iterations $\tau+l'$
 \State Initialize $\theta_l$ randomly in $\mathcal{B}(R)$
     \State Sample $s_0 \sim \mathcal{D}$ and set $H_0 = \{\}$
\State    \textbf{For each} $t=0,1,\ldots,\tau+l'-1$ \textbf{do}
    \State \hspace{1em}     Select action $a_t$ according to the policy $\mu(\cdot \mid  \mathcal{G}(H_t))$
      \State \hspace{1em}     Receive reward $r_t$ and observe $y_{t+1}$
       \State \hspace{1em}    Update the history $H_{t+1} = H_t \,\|\, \{a_t, y_{t+1}\}$
         Select action $a_{t+1}$ according to the policy $\mu(\cdot \mid  \mathcal{G}(H_{t+1}))$
    \State \hspace{1em}       \textbf{If} $t \geq l'$ \textbf{then}
    \State \hspace{2em}       $\theta_{t+1/2} = \theta_t + \epsilon_t\big[r_t + \gamma \phi^T\big(\mathcal{G}(H_{t+1}),a_{t+1}\big)\theta_t - \phi^T\big(\mathcal{G}(H_t),a_t\big)\theta_t\big]\cdot \phi(\mathcal{G}\big(H_t),a_t\big)$
   \State \hspace{2em}        $\theta_{t+1} = \mathrm{Proj}_{\mathcal{B}(R)}(\theta_{t+1/2})$
      \State \hspace{1em}     \textbf{end if}
      \State   \textbf{end}
\State \textbf{Output:} $\bar{Q}^{\pi}(B,a) = \Phi^T(B,a) \theta_{\tau+l'}$.
\end{algorithmic}
\end{algorithm}

We now use a standard Temporal Difference (TD) learning algorithm to learn the Q-function of the Superstate MDP corresponding to a fixed policy $\mu.$ The main idea is to perform a TD-update at every time $t$, to the value function of $\mathcal{G}(H_t)$ using the reward $r_t$ and subsequent observation $y_{t+1}$. This process is summarized in Algorithm 1. While the TD learning algorithm is standard, we note that the model to which the algorithm is applied is not standard: we apply the algorithm pretending that the underlying model is an MDP while the true model is a POMDP. We leverage a key insight: if two belief states are close in total variation distance, their reward and transition functions can be proved to be close, allowing us to perform TD-learning by
pretending that the underlying model is Superstate MDP while the true model is actually a POMDP.


Next, we show that under Assumption 1 and with sufficient exploration by the policies, the Superstate MDP admits a contraction mapping.
\begin{lemma}
\label{lemma: mixing}
    Let $\tilde{\mathcal{P}}^{\mu}\in \mathbb{R}_+^{|\mathbb{H}_{\leq l}|\times |\mathbb{H}_{\leq l}|\times |\mathcal{A}|}$ be the probability transition matrix defined by Eq. \eqref{Eq: superstate transition prob} and following policy $\mu$. Then, there exists a constant $\rho' \in (0, 1)$ such that for all pairs of distributions $d_1, d_2$ over the superstates such that $(d_1-d_2)(i) \neq 0 \ \forall \ i$ and for all policies with sufficient exploration $\mu(a \mid B) \geq \delta \ \forall \ a,B$, such that $(1-\rho)^l < \delta |\mathbb{A}|$ we have
    \begin{align*}
        \big\| \tilde{\mathcal{P}}^{\mu}d_1- \tilde{\mathcal{P}}^{\mu}d_2 \big\|_{TV} \leq (1-\rho') \| d_1-d_2 \|_{TV}.
    \end{align*}
\end{lemma}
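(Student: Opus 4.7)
My plan is to establish a Dobrushin-type contraction for $\tilde{\mathcal{P}}^{\mu}$ in total variation by constructing an explicit coupling between the images of $d_1$ and $d_2$ under the kernel, combining the exploration lower bound $\mu(a \mid B) \geq \delta$ with the consequence of Lemma~\ref{lemma: belief approx} that superstates carry essentially the same belief up to an error of $(1-\rho)^l$.

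First I would decompose $d_1 - d_2$ via the Hahn--Jordan decomposition as $\nu_+ - \nu_-$, where both parts are non-negative with total mass $\tfrac{1}{2}\|d_1-d_2\|_{TV}$. By linearity of the transition operator,
\begin{equation*}
\tilde{\mathcal{P}}^{\mu}(d_1 - d_2) \;=\; \tilde{\mathcal{P}}^{\mu}\nu_+ \;-\; \tilde{\mathcal{P}}^{\mu}\nu_-,
\end{equation*}
so $\|\tilde{\mathcal{P}}^{\mu}d_1 - \tilde{\mathcal{P}}^{\mu}d_2\|_{TV}$ equals $\|d_1-d_2\|_{TV}$ minus twice the common mass of these two non-negative images. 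Proving the claimed $(1-\rho')$-contraction thus reduces to lower-bounding the overlap by $\rho' \cdot \tfrac{1}{2}\|d_1-d_2\|_{TV}$, which in turn reduces to building a coupling of $\tilde{\mathcal{P}}^{\mu}(\cdot\mid B_1)$ and $\tilde{\mathcal{P}}^{\mu}(\cdot\mid B_2)$ whose diagonal mass is at least $\rho'$ for arbitrary $B_1, B_2$.

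Next, I would expand $\tilde{\mathcal{P}}^{\mu}(B'\mid B) = \sum_{a} \mu(a\mid B)\,\tilde{\mathcal{P}}(B'\mid B,a)$ using Eq.~\eqref{Eq: superstate transition prob}, and recall that the next superstate is the deterministic shift $\mathcal{G}(B\,\|\,\{y,a\})$. For any source pair $(B_1,B_2)$ and any chosen action $a$, the probability of $a$ being selected from both is at least $\delta$, contributing a baseline of $\delta|\mathcal{A}|$ overlap once summed over actions. Simultaneously, because $\sigma(\boldsymbol{\pi}(B_i), y, a) = \sum_{s,s'} \Phi(y\mid s')\mathcal{P}(s'\mid s,a)\,\pi(s\mid B_i)$, the observation laws at $B_1$ and $B_2$ differ in TV by at most $\|\boldsymbol{\pi}(B_1)-\boldsymbol{\pi}(B_2)\|_{TV}$, which by Lemma~\ref{lemma: belief approx} (via Assumption~\ref{as:UFSC}) is at most $(1-\rho)^l$ for superstates that have been refreshed over the last $l$ steps. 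Coupling the action choice first, then the observation, yields matched samples with probability at least $\delta|\mathcal{A}| - (1-\rho)^l$, giving $\rho' := \delta|\mathcal{A}| - (1-\rho)^l$, which is strictly positive by the standing hypothesis $(1-\rho)^l < \delta|\mathcal{A}|$.

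The main obstacle is the deterministic-shift structure of the superstate update: if $B_1$ and $B_2$ disagree in their older coordinates, then $\tilde{\mathcal{P}}(\cdot\mid B_1,a)$ and $\tilde{\mathcal{P}}(\cdot\mid B_2,a)$ can have disjoint supports, so the naive one-step Dobrushin argument fails. The condition $(d_1-d_2)(i)\neq 0\ \forall i$ looks like a technical genericity assumption used to avoid degenerate decompositions, and the cleanest way to close the argument is to iterate the coupling: after $l$ successive applications of $\tilde{\mathcal{P}}^{\mu}$ the entire superstate has been overwritten by fresh $(y,a)$-pairs, at which point the overlap computation above applies uniformly over $B_1,B_2$ and can be re-expressed as a single-step contraction of the effective kernel with the stated rate $\rho'$.
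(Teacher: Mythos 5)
Your setup is on the right track — the contraction quantity you extract, $\rho' = \delta|\mathcal{A}| - (1-\rho)^l$, is exactly the one the paper obtains (it bounds $\|\tilde{\mathcal{P}}^{\mu}(e_i-e_j)\|_{TV} \leq (1-\rho)^l + 1 - \delta|\mathcal{A}|$), and you correctly identify the central obstacle: because the superstate update is a deterministic shift, the one-step images of two superstates that differ in any coordinate \emph{other than the oldest one} have disjoint supports, so a naive uniform Dobrushin/coupling bound over arbitrary pairs $(B_1,B_2)$ cannot work. However, your way out of that obstacle does not close the proof. First, your per-step overlap bound misuses Lemma \ref{lemma: belief approx}: that lemma only bounds $\|\boldsymbol{\pi}(H)-\boldsymbol{\pi}(H')\|_{TV}$ for two histories mapping to the \emph{same} superstate (or, as exploited in the paper, for superstates sharing all but their oldest element); for two arbitrary distinct superstates the beliefs, and hence the observation laws $\sigma(\boldsymbol{\pi}(B_i),\cdot,a)$, need not be within $(1-\rho)^l$ of each other, so the claimed matching probability $\delta|\mathcal{A}| - (1-\rho)^l$ is not available at the first step of your coupling — it only becomes available after the two chains already share their last $l$ action-observation pairs, which is what the coupling is supposed to produce. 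Second, even granting an iterated coupling, what you would obtain is a contraction for the $l$-step kernel $(\tilde{\mathcal{P}}^{\mu})^{l}$, and "re-expressing" this as a one-step contraction of $\tilde{\mathcal{P}}^{\mu}$ is not a valid move: the lemma asserts $\|\tilde{\mathcal{P}}^{\mu}d_1-\tilde{\mathcal{P}}^{\mu}d_2\|_{TV}\leq(1-\rho')\|d_1-d_2\|_{TV}$ for a single application of the kernel, and an $l$-step contraction does not imply it (indeed, for a pair difference $e_i-e_j$ with $B_i,B_j$ differing in a recent coordinate, one step gives no strict contraction at all).

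The paper avoids this by never comparing arbitrary superstate pairs under the one-step kernel. It writes $d_1-d_2=\sum_{i,j}\alpha_{i,j}(e_i-e_j)$ with $\alpha_{i,j}\geq 0$ and $\sum_{i,j}\alpha_{i,j}=\|d_1-d_2\|_{TV}$ (a greedy transport between the surplus and deficit sets, which is where the hypothesis $(d_1-d_2)(i)\neq 0\ \forall i$ enters), applies the triangle inequality, and then only needs strict one-step contraction for pairs $B_i,B_j$ that differ solely in their oldest action-observation element: for such pairs $\mathcal{G}(B_i\,\|\,\{y,a\})=\mathcal{G}(B_j\,\|\,\{y,a\})$, so the images live on a common support and their TV distance is bounded by $\|\boldsymbol{\pi}(\cdot\mid B_i)-\boldsymbol{\pi}(\cdot\mid B_j)\|_{TV}+\|\mu(\cdot\mid B_i)-\mu(\cdot\mid B_j)\|_{TV}\leq(1-\rho)^l+1-\delta|\mathcal{A}|$, where now Lemma \ref{lemma: belief approx} legitimately applies because the two superstates share their recent history. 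If you want to repair your argument, you should restrict your coupling computation to exactly this class of pairs and supply the mass-decomposition step that reduces general $d_1,d_2$ to such pairs; as written, the final paragraph of your proposal is a sketch of a different (multi-step) statement rather than a proof of the lemma.
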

Finally, we state the approximation bounds for the approximate TD-learning algorithm
\begin{lemma}
\label{Lemma: Approx TD}
Suppose Assumptions \ref{as:UFSC} holds and consider that $\mu(a \mid B) \geq \delta  \ \forall \ a,B$, such that \footnote{Given how the policy is chosen in Algorithm 2, a non-zero $\delta$ always exists} $(1-\rho)^l < \delta |\mathbb{A}|$ for all policies $\mu$. Let $\bar{Q}_{\tau+l'}^{\mu}$ denote the Q-function obtained by running Algorithm 1 for $\tau > 4(1-\gamma)^2$ iterations with a fixed stepsize $\epsilon_t = 1/\sqrt{\tau}$, and $l' = \frac{\log \tau}{2 \log(1-\rho')}$. Moreover, let $\tilde{Q}^{\mu}$ be the actual Q-function of the Superstate MDP corresponding to the policy $\mu$, which satisfies $\tilde{Q}^{\mu}(B,a) = \tilde{T}^{\mu} (\tilde{Q}^{\mu}(B,a))\ \forall B \in \mathbb{H}_{\leq l}, a \in \mathcal{A}$ and let $\hat{\theta} := \min_{\|\theta\| \leq R} \|\tilde{Q}^{\mu} - \Phi^T \theta\|^2$. Then,
\begin{align}\nonumber
    &\mathbb{E}\| \bar{Q}_{\tau+l'}^{\mu} - \tilde{Q}^{\mu} \|_{\infty} 
    \leq \|\Phi^T \hat{\theta} - \tilde{Q}^{\mu} \|_{\infty}+ \Big(1-\frac{2(1-\gamma)}{\sqrt{\tau}}\Big)^{\tau} \|\theta_l-\hat{\theta} \|_2 + \Big[\frac{1-(1-2(1-\gamma)/\sqrt{\tau})^{\tau}}{(1-\gamma)}\Big] \cr 
    & \times\Big[\frac{(\bar{r} + 2R)^2}{2\sqrt{\tau}}
     + \frac{C_2 (\bar{r} + 2R) \log \tau}{\sqrt{\tau}  \log(1-\rho')}+ (1-\rho')^{ \frac{\log \tau}{2 \log(1-\rho')}}\big(R\bar{r} + R^2(1+(1-\rho)\gamma)\big) \nonumber \\
     &  +  2R\bar{r}(1-\rho)^l + \frac{2}{\rho'}\big(1-\rho\big)^l\big(R\bar{r} + R^2(1+(1-\rho)\gamma)\big) \Big]\!:= \xi_{\textrm{TD-Error}}, \nonumber
\end{align}
\end{lemma}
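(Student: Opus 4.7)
The plan is to combine the classical finite-time analysis of projected linear TD learning with two correction terms that account for (a) the non-Markovian nature of the sampled trajectory and (b) the sampling mismatch between POMDP belief states and the Superstates the algorithm pretends to evaluate at. First I would decompose
\[
\|\bar Q^{\mu}_{\tau+l'} - \tilde Q^{\mu}\|_\infty \;\leq\; \|\Phi^{T}(\theta_{\tau+l'} - \hat\theta)\|_\infty \;+\; \|\Phi^T \hat\theta - \tilde Q^{\mu}\|_\infty,
\]
where the second term is the irreducible function-approximation error appearing in the bound and, since $\|\phi\|_2 \leq 1$, the first term is dominated by $\mathbb{E}\|\theta_{\tau+l'} - \hat\theta\|_2$.

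To control $\mathbb{E}\|\theta_{\tau+l'} - \hat\theta\|_2$ I would study the Lyapunov function $V_t = \|\theta_t - \hat\theta\|_2^2$, use non-expansiveness of the projection onto $\mathcal{B}(R)$, and decompose the stochastic TD direction as
\[
g_t(\theta_t) \;=\; \bar g^{\mathrm{SSMDP}}(\theta_t) \;+\; \bigl(g_t(\theta_t)-\mathbb{E}[g_t(\theta_t)\mid \mathcal{F}_{t-l'}]\bigr) \;+\; \bigl(\mathbb{E}[g_t(\theta_t)\mid \mathcal{F}_{t-l'}] - \bar g^{\mathrm{SSMDP}}(\theta_t)\bigr),
\]
where $\bar g^{\mathrm{SSMDP}}$ is the expected TD direction when samples are drawn from the stationary distribution of the true Superstate MDP and $\mathcal{F}_t = \sigma(H_t)$. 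Against the first "clean" component, Lemma \ref{lemma: mixing} combined with the defining property of $\hat\theta$ gives the standard coercivity $\langle\theta_t-\hat\theta, \bar g^{\mathrm{SSMDP}}(\theta_t)\rangle \geq (1-\gamma)\|\theta_t-\hat\theta\|_2^2$, which with stepsize $\epsilon_t = 1/\sqrt\tau$ supplies the per-step contraction factor $(1-2(1-\gamma)/\sqrt\tau)$; unrolling for $\tau$ iterations yields both the geometric prefactor on $\|\theta_l-\hat\theta\|_2$ and the geometric-series amplification $[1-(1-2(1-\gamma)/\sqrt\tau)^\tau]/(1-\gamma)$ multiplying the per-step errors. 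The martingale term contributes the standard variance $(\bar r + 2R)^2/(2\sqrt\tau)$ from $\mathbb{E}\|g_t\|_2^2$.

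The bias term is where the POMDP nature enters and must be handled in two stages. Conditioning on $\mathcal{F}_{t-l'}$, I would first use Lemma \ref{lemma: belief approx} to replace $\pi(\cdot\mid H_t)$ by $\pi(\cdot\mid \mathcal{G}(H_t))$ inside the reward expectation (yielding the $2R\bar r(1-\rho)^l$ term) and inside the one-step transition kernel. The transition-kernel mismatch is propagated through a Neumann-series expansion of $(I-\gamma \tilde{\mathcal{P}}^\mu)^{-1}$, which under the contraction of Lemma \ref{lemma: mixing} yields the inverse-spectral-gap factor $2/\rho'$ multiplying $(1-\rho)^l(R\bar r + R^2(1+(1-\rho)\gamma))$. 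Second, after this replacement, the conditional expectation of the now-Markovianized TD direction differs from $\bar g^{\mathrm{SSMDP}}$ only through an $l'$-step mixing error; by Lemma \ref{lemma: mixing} this is at most $(1-\rho')^{l'}$, and choosing $l' = \log\tau/(2\log(1-\rho'))$ calibrates this to $O(1/\sqrt\tau)$ and produces the $C_2(\bar r + 2R)\log\tau/(\sqrt\tau\log(1-\rho'))$ Markov-bias term as well as the burn-in term $(1-\rho')^{l'}\cdot(R\bar r + R^2(1+(1-\rho)\gamma))$.

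The main obstacle is precisely the bias analysis: because $\mathcal{G}(H_t)$ is not itself Markov under POMDP dynamics, the classical mixing-time argument of Bhandari--Russo--Singal or Srikant--Ying does not apply off the shelf. The workaround is to work at the finer filtration $\mathcal{F}_t = \sigma(H_t)$, insert Lemma \ref{lemma: belief approx} to bridge between POMDP beliefs and Superstate-conditional beliefs at the cost of $(1-\rho)^l$ per replacement, and only then invoke Lemma \ref{lemma: mixing} at the Superstate level. A secondary technical difficulty is keeping the reward-mismatch and kernel-mismatch constants tight; this is where Lemma \ref{lemma: norm bound} is invoked so that the inner-product-of-differences terms do not pick up an extra horizon factor, preserving the exponential-in-$l$ rate of the final bound.
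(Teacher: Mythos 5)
Your proposal follows essentially the same route as the paper: the same split into the function-approximation error $\|\Phi^T\hat{\theta}-\tilde{Q}^{\mu}\|_{\infty}$ plus the parameter error, the same projected-TD Lyapunov analysis with the mean-path direction defined under the stationary distribution of the Superstate MDP (giving the $(1-2(1-\gamma)/\sqrt{\tau})$ contraction, the $(\bar{r}+2R)^2/(2\sqrt{\tau})$ variance term, and the geometric-series amplification), and the same treatment of the bias by conditioning $l'$ steps back and invoking Lemma \ref{lemma: belief approx} for the $(1-\rho)^l$ reward/kernel mismatch and Lemma \ref{lemma: mixing} for the $(1-\rho')^{l'}$ and $2/\rho'$ terms, with $l'=\frac{\log\tau}{2\log(1-\rho')}$ producing the $C_2(\bar{r}+2R)\log\tau/(\sqrt{\tau}\log(1-\rho'))$ drift term. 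Two minor cosmetic differences: in the paper the $2/\rho'$ factor comes from a recursion on the distance of the empirical occupancy from $\tilde{D}^{\mu}$ under the contraction of Lemma \ref{lemma: mixing} rather than a Neumann expansion of $(I-\gamma\tilde{\mathcal{P}}^{\mu})^{-1}$, and Lemma \ref{lemma: norm bound} is not actually used in this proof---H\"older's inequality together with the total-variation bounds suffices.
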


\begin{algorithm}\caption{A Policy Optimization Based Algorithm to learn the Superstate MDP}\label{alg: main algo}
\begin{algorithmic}
\State Set $Q_0(B,a) = 0, \ \ \forall B\in \mathbb{H}_{\leq l}, a\in \mathcal{A}$
\State \textbf{For each} \ $i = 1,2,\ldots,M$ \textbf{do}
  \State \hspace{1em}   $\mu_i(a \mid B) \propto \ \exp\big(\eta \sum_{j=1}^{i} \bar{Q}^{\mu_{j-1}}_{\tau+l'}(B,a)\big)$
    \State \hspace{1em}    Initialize $\theta_l$ randomly in $\mathcal{B}(R)$
     \State \hspace{1em}   Sample $s_0 \sim \mathcal{D}$ and set $H_0^i = \{\}$
   \State \hspace{1em}    \textbf{For each} \ $t=0$ to $\tau+l'-1$ \textbf{do}
       \State \hspace{2em}     Select action $a_t$ according to policy $\mu_i(\cdot \mid  \mathcal{G}(H_t^i))$
     \State \hspace{2em}       Observe reward $r_t$ and the next observation $y_{t+1}$
       \State \hspace{2em}     Update the history $H^i_{t+1} = H_t^i \,\|\, \{a_t, y_{t+1}\}$
         \State \hspace{2em}   Select action $a_{t+1}$ according to the policy $\mu_i(\cdot \mid  \mathcal{G}(H^i_{t+1}))$
       \State \hspace{2em}    \textbf{If} \ $t \geq l'$ \textbf{then}
          \State \hspace{3em} $\theta_{t+1/2} = \theta_t + \epsilon_t\Big(r_t + \gamma \phi^T\big(\mathcal{G}(H^i_{t+1}),a_{t+1}\big)\theta_t - \phi^T\big(\mathcal{G}(H^i_t),a_t\big)\theta_t\Big)\phi(\mathcal{G}\big(H^i_t),a_t\big)$
     \State \hspace{3em}      $\theta_{t+1} = \mathrm{Proj}_{\mathcal{B}(R)}(\theta_{t+1/2})$
    \State \hspace{2em}       \textbf{end If}
     \State \hspace{1em}  \textbf{end}
     $\bar{Q}^{\mu_i}_{\tau+l'}(B,a) = \Phi^T(B,a) \theta_{\tau+l'}$
\State \textbf{end}
\end{algorithmic}
    \end{algorithm}

Next, we combine Algorithm 1 with the POLITEX algorithm from \cite{politex} for the policy update rule to learn the optimal policy. Note that the POLITEX algorithm is proposed for the average reward setting, whereas in this work, we extend their analysis to the discounted reward problem. The overall algorithm is outlined in Algorithm 2, where the inner loop performs TD learning, as described in Algorithm 1, while the outer loop performs policy updates using an exponential update rule which incorporates aggregate information from learned Q-functions.

Next, to evaluate the performance of our algorithm, we use the following notion of \emph{regret}. Similar definitions have also been used in \cite{he_neurips}. Regret is therefore defined as
\begin{equation}
    \label{Eq: regret defn}
    \mathcal{R}_T= \mathbb{E}\Big[\sum_{i=1}^{M} \sum_{j=0}^{\tau+l'-1}\Big(V^{\mu^*}(\boldsymbol{\pi}(H_0)) - \tilde{V}^{\mu_i}(\mathcal{G}(H_0))\Big)\Big] = (\tau +l') \sum_{i=1}^{M} \mathbb{E}\Big[V^{\mu^*}(\boldsymbol{\pi}(H_0)) - \tilde{V}^{\mu_i}(\mathcal{G}(H_0))\Big],
\end{equation}
where $V^{\mu^*}$ and $\tilde{V}^{\mu_i}$ are value functions of the actual POMDP and the Superstate MDP under optimal policy $\mu^*$ and policy $\mu_i$, respectively. Here, the number of policy updates \( M \) and the number of inner TD learning iterations in each episode \( \tau \) are chosen such that \( M(\tau + l') = T \). The intuition behind the definition is that, suppose the algorithm stops at the \( j \)-th inner iteration of the \( i \)-th policy update episode. Then, the error between the expected discounted reward corresponding to the optimal policy and the policy output by the algorithm is \( V^{\mu^*}(\boldsymbol{\pi}(H_0)) - \tilde{V}^{\mu_i}(\mathcal{G}(H_0)) \). Therefore, \( \mathcal{R}_T/T \) can also be viewed as the expected error incurred by the algorithm if it stops at a uniformly chosen random time.

The following theorem provides an analytical upper bound on the regret of our proposed algorithm.\footnote{A direct comparison of our bound with \cite{semih} is also provided in the Supplementary.}
\begin{theorem}
\label{thm: fin thm}
    Let $V^*$ be the optimal value function of the POMDP, and $\{\mu_i\}_{i=1}^M$ be the policies learned in Algorithm 2 at the corresponding discrete time intervals $t_i = [(i-1)(\tau+l')+1,i(\tau+l')],$ $i=1,\ldots,M$. Moreover, let the regret $\mathcal{R}_T$ be as defined in Eq. \eqref{Eq: regret defn}. Further, let $\tau = \sqrt{T}$ and thus $l' = \frac{\log T}{4 \log (1-\rho')}$ and $M = \frac{T}{(\tau+l')}$. Then, the regret is bounded as
    \begin{equation}
        \mathcal{R}_T \leq T \cdot (\xi_{\textrm{FA}} + \xi_{\textrm{HA}})  + \mathcal{O}(T^{3/4}\log T)
        \end{equation}
where
\begin{align}\nonumber
&\xi_{\textrm{FA}} = 2\sum_{i=1}^{M}\|\Phi^T \hat{\theta}_i - \tilde{Q}^{\mu_i} \|_{\infty} /M,\cr
&\xi_{\textrm{HA}} = \big(1-\rho\big)^l\Big[\frac{1-(1-2(1-\gamma)/\sqrt{\tau})^{\tau}}{(1-\gamma)}\Big] \cdot \Big(4 R\bar{r} +4/\rho'\big(R\bar{r} + R^2(1+(1-\rho)\gamma)\big)\big) \cr 
&\qquad+  \frac{2  \bar{r} }{(1-\gamma)}  + \frac{ 2  \bar{r} \gamma}{(1-\gamma)\big(2(1-\gamma) + (1-\rho)^l \gamma\big)}\Big),
\end{align}

\end{theorem}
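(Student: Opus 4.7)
The plan is to decompose the per-episode regret into two pieces: a POMDP-to-Superstate-MDP gap, already controlled by Theorem \ref{thm: main result}, and the Superstate-MDP regret of POLITEX using the noisy Q-estimates produced by Algorithm \ref{alg: TD algo}, which I control through Lemma \ref{Lemma: Approx TD}. Concretely, for every $i=1,\ldots,M$, I split
\begin{align*}
V^{\mu^*}(\boldsymbol{\pi}(H_0))-\tilde{V}^{\mu_i}(\mathcal{G}(H_0))
&=\bigl[V^{\mu^*}(\boldsymbol{\pi}(H_0))-\tilde{V}(\mathcal{G}(H_0))\bigr]+\bigl[\tilde{V}(\mathcal{G}(H_0))-\tilde{V}^{\mu_i}(\mathcal{G}(H_0))\bigr].
\end{align*}
The first bracket is bounded uniformly over $i$ by $\xi^{\textrm{SMDP}}_{\textrm{POMDP}}$ from Theorem \ref{thm: main result}; after multiplication by $(\tau+l')$ and summation over $i$, and simplifying $\xi^{\textrm{SMDP}}_{\textrm{POMDP}}$ using $(1-\gamma)+\gamma(1-\rho)^l\ge(1-\gamma)$ and the choice of $l'$, the resulting term $T\cdot\xi^{\textrm{SMDP}}_{\textrm{POMDP}}$ folds into the $\xi_{\textrm{HA}}$ contribution.

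For the second bracket I adapt the POLITEX analysis of \cite{politex}, originally stated for the undiscounted average-reward problem, to the discounted Superstate MDP. The argument rests on the standard mirror-descent identity for the exponential update $\mu_i\propto\exp(\eta\sum_{j<i}\bar Q^{\mu_{j-1}}_{\tau+l'})$, which produces
\begin{align*}
\sum_{i=1}^{M}\bigl\langle \tilde{Q}^{\mu_i},\mu^{\star}-\mu_i\bigr\rangle
&\leq\frac{\log|\mathcal{A}|}{\eta}+\eta M\cdot\frac{(\bar{r}+2R)^2}{(1-\gamma)^2}+2\sum_{i=1}^{M}\bigl\|\bar Q^{\mu_i}_{\tau+l'}-\tilde{Q}^{\mu_i}\bigr\|_{\infty},
\end{align*}
and a discounted performance-difference lemma which converts the policy gap into an $(1-\gamma)^{-1}$-weighted version of the inner-product above. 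Optimising $\eta=\Theta(1/\sqrt{M})$ yields an $\mathcal{O}(\sqrt{M})$ convergence term together with the TD-error residuals. Substituting Lemma \ref{Lemma: Approx TD} for each $\mathbb{E}\|\bar Q^{\mu_i}_{\tau+l'}-\tilde{Q}^{\mu_i}\|_\infty$ isolates (a) the realisability error $\|\Phi^T\hat\theta_i-\tilde Q^{\mu_i}\|_\infty$ that assembles into $\xi_{\textrm{FA}}$ through the $2\sum_i(\cdot)/M$ average, (b) the $(1-\rho)^l$-weighted history-truncation residuals that assemble into $\xi_{\textrm{HA}}$, and (c) vanishing $\tau$-dependent pieces of order $1/\sqrt{\tau}$ and $(1-\rho')^{(\log\tau)/(2\log(1-\rho'))}=1/\sqrt{\tau}$.

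Finally I plug in $\tau=\sqrt{T}$, $l'=(\log T)/(4\log(1-\rho'))$, $M=T/(\tau+l')$, and multiply by $(\tau+l')$ as prescribed by \eqref{Eq: regret defn}. The $\mathcal{O}(\sqrt{M})$ POLITEX term contributes $(\tau+l')\sqrt{M}=\mathcal{O}(T^{3/4})$; the vanishing TD pieces contribute $(\tau+l')M\cdot T^{-1/4}=\mathcal{O}(T^{3/4})$; and the $l'$-factors inflate these by at most $\log T$. The realisability and history pieces, being constants independent of $i$, produce $T\cdot(\xi_{\textrm{FA}}+\xi_{\textrm{HA}})$.

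The main obstacle will be two-fold. First, I must verify cleanly that the preconditions of Lemma \ref{Lemma: Approx TD}, in particular the uniform exploration bound $\mu_i(a\mid B)\geq\delta$ with $(1-\rho)^l<\delta|\mathcal{A}|$, hold along the POLITEX trajectory: because $\bar Q$ is bounded by $R$ (via projection onto $\mathcal{B}(R)$), the softmax iterates satisfy $\mu_i(a\mid B)\geq |\mathcal{A}|^{-1}\exp(-2\eta iR)$, which can be kept above the required threshold by the choice of $\eta$ and $l$ driven by $\tau=\sqrt{T}$. Second, I must propagate the TD bound of Lemma \ref{Lemma: Approx TD}, which is proved for a single fixed policy, through the non-stationary sequence $\{\mu_i\}$; this is done by treating each inner loop as an independent application of Algorithm \ref{alg: TD algo} with the corresponding $\mu_i$ playing the role of the fixed target policy, using that $\theta_l$ in episode $i$ is arbitrary and that the $R$-projection keeps the initial gap $\|\theta_l-\hat\theta_i\|_2$ uniformly bounded.
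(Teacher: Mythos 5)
Your proposal is correct and follows essentially the same route as the paper: the same per-episode decomposition into the POMDP-to-Superstate gap (Theorem \ref{thm: main result}) plus the Superstate-MDP policy-optimization regret, handled via the performance-difference lemma, substitution of the TD error from Lemma \ref{Lemma: Approx TD}, an exponential-weights regret bound for the POLITEX updates, and the same choices $\tau=\sqrt{T}$, $M=T/(\tau+l')$ yielding $T(\xi_{\textrm{FA}}+\xi_{\textrm{HA}})+\mathcal{O}(T^{3/4}\log T)$. The only cosmetic difference is that you invoke the mirror-descent regret identity in expectation with $\eta=\Theta(1/\sqrt{M})$, whereas the paper applies the high-probability bound of Lemma \ref{lemma: game} and converts it to expectation by setting the failure probability to $1/\sqrt{M}$.
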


Since Algorithm 2 is devised to optimize the Superstate MDP, a small regret implies that the realized trajectory of the algorithm under the actual POMDP is also close to its optimal value. 

Note that \( \xi_{\textrm{FA}} \) is the error due to linear function approximation which can be reduced by using a good set of feature vectors. Similarly, \( \xi_{\textrm{HA}} \) is the error due to approximating the history using a truncated history of length \( l \) (Superstate), which quantifies the tradeoff between increased complexity in terms of the number of states in the Superstate MDP and the approximation error.

\section{Conclusion}
We show that standard policy optimization algorithms can effectively approximate an optimal POMDP policy by modeling it as an MDP over finite histories, and provide convergence guarantees without the heavy computational cost or restrictive assumptions of prior methods. Our results also extend to the linear function approximation setting, ensuring scalability to large state spaces. Finally, we extend the POLITEX algorithm to the discounted reward setting and analyzed the regret with respect to the optimal POMDP value function. Overall, our work provides tighter theoretical guarantees, improved efficiency, and a more scalable solution for solving PORL problems. Future work could focus on tightening the approximation bounds by leveraging more expressive function approximators, such as LSTMs or Transformer-based architectures.

\begin{ack}
The work done in this paper was supported by NSF Grants CCF 22-07547, CCF 1934986, CNS 21-06801, CAREER Award EPCN-1944403, AFOSR Grant FA9550-23-1-0107 and ONR Grant N00014-19-12566.
\end{ack}

\bibliography{refs}
\newpage
\appendix
\section{Appendix A: Omitted Proofs}
\subsection{Proof of Lemma \ref{lemma: belief approx}}
Let $H$ and $H'$ be two histories with the same superstate, $\mathcal{G}(H')=\mathcal{G}(H) = \{a_{1},y_{1},a_{2},\ldots,y_{(l-1)},a_{l},y_{l}\}$. Moreover, let $t_1$ and $t_2$ be lengths of $H$ and $H'$, respectively. Then, we have 
\begin{align*}
&\boldsymbol{\pi}(H) = K_{a_{l},y_{l}} \otimes\ldots \otimes K_{a_{1},y_{1}} \otimes \boldsymbol{\pi}(H_{t_1-l}),\cr
&\boldsymbol{\pi}(H') = K_{a_{l},y_{l}} \otimes\ldots \otimes K_{a_{1},y_{1}} \otimes \boldsymbol{\pi}(H'_{t_2-l}).
\end{align*}
Using Assumption \ref{as:UFSC} inductively, we can write
\begin{align}\nonumber
    \| \boldsymbol{\pi}(H) - \boldsymbol{\pi}(H') \|_{TV} &= \| K_{a_{l},y_{l}} \otimes\ldots \otimes K_{a_{1},y_{1}} \otimes \boldsymbol{\pi}(H_{t_1-l}) -  K_{a_{l},y_{l}} \otimes\ldots \otimes K_{a_{1},y_{1}} \otimes \boldsymbol{\pi}(H'_{t_2-l})\|_{TV} \cr
    &\leq (1-\rho)^l \| \boldsymbol{\pi}(H_{t_1-l}) - \boldsymbol{\pi}(H'_{t_2-l})  \|_{TV} \cr
    &\leq (1-\rho)^l. 
\end{align}

\subsection{Proof of Lemma \ref{lemma: norm bound}}

Without loss of generality, let us assume $\sum_i a_i b_i - \sum_i c_i d_i \ge  0$. We have
\begin{align*}
            \sum_i a_i b_i &- \sum_i c_i d_i \nonumber \\
            &= \sum_i \Big[\frac{(a_i-c_i)(b_i+d_i)}{2} + \frac{(b_i-d_i)(a_i+c_i)}{2}\Big]\cr
            &\leq \sum_i \Big[\frac{(a_i-c_i)(b_i+d_i)}{2} + \frac{ \mid  b_i-d_i \mid  (a_i+c_i)}{2}\Big] \cr
            &= \sum_{i:a_i \ge  c_i}\bigg[(a_i-c_i)\Big(\max(b_i,d_i)-\frac{ \mid  b_i-d_i \mid  }{2}\Big) +   \mid  b_i-d_i \mid  \Big(\frac{a_i+c_i}{2}\Big) \bigg]\cr
            &+ \sum_{i:c_i > a_i}\bigg[(a_i-c_i)\Big(\frac{b_i+d_i}{2}\Big) +  \mid  b_i-d_i \mid  \Big(c_i-\frac{(c_i-a_i)}{2}\Big)\bigg]  \cr
            &\leq \sum_{i:a_i \ge  c_i} \bigg[(a_i-c_i)\Big(\max(b_i,d_i)-\frac{ \mid  b_i-d_i \mid  }{2}\Big)+   \mid  b_i-d_i \mid  \Big(\frac{a_i+c_i}{2}\Big)\bigg] \cr
            &+  \sum_{i: c_i>a_i }  \mid  b_i-d_i \mid  \Big(c_i-\frac{(c_i-a_i)}{2}\Big)\cr
            &\leq \sum_{i:a_i \ge  c_i} \Big[(a_i-c_i)\max(b_i,d_i) +  c_i \mid  b_i-d_i \mid    \Big]+ \sum_{i:c_i>a_i}  \mid  b_i-d_i \mid  \Big(c_i-\frac{(c_i-a_i)}{2}\Big) \cr
            &\leq \sum_{i:a_i \ge  c_i} \Big[(a_i-c_i)\max( \|  b \|  _{\infty}, \|  d \|  _{\infty}) +  c_i\|  b-d \|  _{\infty} \Big]+ \sum_{i:c_i>a_i }  \|  b-d \|  _{\infty} \Big(c_i-\frac{(c_i-a_i)}{2}\Big) \cr
            &= \sum_{i:a_i \ge c_i} (a_i-c_i)\max( \|  b \|  _{\infty}, \|  d \|  _{\infty})  - \sum_{i: c_i>a_i}  \|  b-d \|  _{\infty} \Big(\frac{c_i-a_i}{2}\Big)+  \|  b-d \|  _{\infty} \nonumber \\
            &= \frac{ \|  a-c \|_1}{2} \max( \|  b \|  _{\infty}, \|  d \|_{\infty}) - \|  b-d \|  _{\infty} \frac{ \|  a-c \|_1}{4}   +  \|  b-d \|_{\infty},
            \end{align*} 
        where the last equality follows from relations $\sum_{i: a_i \ge  c_i} (a_i - c_i) + \sum_{i: c_i>a_i } (c_i - a_i) =  \|  a-c \|  _1$ and $\sum_{i: a_i \ge  c_i} (a_i - c_i) - \sum_{i: c_i>a_i } (c_i - a_i) = \sum_i a_i - \sum_i c_i = 0$, which together implies that $\sum_{i: a_i \ge c_i} (a_i - c_i) = \sum_{i: c_i>a_i } (c_i - a_i) = \frac{ \|  a-c \|  _1}{2}.$

\subsection{Proof of Theorem \ref{thm: main result}}

Consider an arbitrary history $H \in \mathbb{H}$ with the corresponding superstate $\mathcal{G}(H) = B$. For any $a\in \mathcal{A}$,
\begin{align}\label{eq: reward bound}
     \Big|  \tilde{r}(B,a) - \sum_s \pi(s \mid H)r(s,a) \Big|  &=  \Big|  \sum_s \pi(s \mid  B)r(s,a) -  \sum_s \pi(s \mid  H)r(s,a) \Big|\cr
     &\leq \sum_s \big|  \pi(s \mid  B) -  \pi(s \mid  H) \big||r(s,a)| \cr
     &\leq 2(1-\rho)^l \bar{r},
\end{align}
where the last inequality follows from Lemma \ref{lemma: belief approx}.

Let $ \delta:=\|   V^*\big(\boldsymbol{\pi}(H) \big) - \tilde{V}\big(\mathcal{G}(H) \big) \|  _{\infty}$, and note that $\delta$ is finite since the value functions are finite.
\begin{equation}
    \delta =\|   V^*\big(\boldsymbol{\pi}(H) \big) - \tilde{V}\big(\mathcal{G}(H) \big) \|  _{\infty}=\|   V^*\big(\boldsymbol{\pi}(H) \big) - \tilde{V}\big(B \big) \|  _{\infty}
\end{equation}
Next, we bound $|   V^*\big(\boldsymbol{\pi}(H) \big) - \tilde{V}\big(\mathcal{G}(H) \big) |$ for all $H$.
Using Bellman's Optimality equation for the belief state MDP (POMDP) and superstate MDP, we have 
\begin{align}\nonumber
     &|   V^*\big(\boldsymbol{\pi}(H) \big) - \tilde{V}\big(\mathcal{G}(H) \big) | \cr
     &= \max_a \Big\{ \sum_s \pi(s \mid  H) r(s,a) + \gamma \sum_y V^*\big(\boldsymbol{\pi}(H \,\|\, \{y,a\})\big) \sigma(\boldsymbol{\pi}(H),y,a)\Big\} \cr
    &- \max_a \Big\{ \tilde{r}(B,a) + \gamma \sum_y \tilde{V}\big(\mathcal{G}\big(B \,\|\, \{y,a\})\big)\sigma(\boldsymbol{\pi}(B),y,a)\Big\}.
\end{align}
Suppose $\hat{a}$ is the best action for the POMDP at belief state $\boldsymbol{\pi}(H)$. Then we can write $|   V^*\big(\boldsymbol{\pi}(H) \big) - \tilde{V}\big(\mathcal{G}(H) \big) |$ as
\begin{align}
    &|   V^*\big(\boldsymbol{\pi}(H) \big) - \tilde{V}\big(\mathcal{G}(H) \big) | \cr
    &=  \sum_s \pi(s \mid  H) r(s,\hat{a}) + \gamma \sum_y V^*\big(\boldsymbol{\pi}(H \,\|\, \{y,\hat{a}\})\big) \sigma(\boldsymbol{\pi}(H),y,\hat{a}) \cr
    &- \max_a \Big\{ \tilde{r}(B,a) + \gamma \sum_y \tilde{V}\big(\mathcal{G}\big(B \,\|\, \{y,a\})\big)\sigma(\boldsymbol{\pi}(B),y,a)\Big\}.
\end{align}
Now, since the maximum value of the second term will be greater than evaluating the second term for $a = \hat{a}$, we have 
\begin{align}\nonumber
    & |   V^*\big(\boldsymbol{\pi}(H) \big) - \tilde{V}\big(\mathcal{G}(H) \big) | \leq \Big(\sum_s \pi(s \mid  H) r(s,\hat{a})-\tilde{r}(B,\hat{a})\Big)\cr
    &+\gamma \Big[\sum_yV^*\big(\boldsymbol{\pi}(H \,\|\, \{y,\hat{a}\})\big) \sigma(\boldsymbol{\pi}(H),y,\hat{a})- \sum_y\tilde{V}\big(\mathcal{G}(B \,\|\, \{y,\hat{a}\})\big) \sigma(\boldsymbol{\pi}(B),y,\hat{a}) \Big]\cr
    &\leq 2(1-\rho)^l\bar{r} + \gamma  \Big[\sum_y V^*\big(\boldsymbol{\pi}(H \,\|\, \{y,\hat{a}\})\big) \sigma(\boldsymbol{\pi}(H),y,\hat{a})- \sum_y \tilde{V}\big(\mathcal{G}(B \,\|\, \{y,\hat{a}\})\big) \sigma(\boldsymbol{\pi}(B),y,\hat{a}) \Big], 
    \end{align}
    where in the second inequality we have used \eqref{eq: reward bound}. 
    
    Next, we note that $\sum_y\sigma(\boldsymbol{\pi}(H),y,\hat{a})=\sum_y\sigma(\boldsymbol{\pi}(B),y,\hat{a})=1$, as $\sigma(\cdot)$ is a probability distribution over the observation set $\mathcal{Y}$. Moreover, by the definition of $\delta$ and since $\max\big(\|\tilde{V}\|_{\infty}, \|V^*\|_{\infty}\big)\leq \frac{\bar{r}}{1-\gamma}$, we can use Lemma \ref{lemma: norm bound} to upper-bound the second term in the above expression and obtain

\begin{align}
\label{Eq: delta bound}
  &|   V^*\big(\boldsymbol{\pi}(H) \big) - \tilde{V}\big(\mathcal{G}(H) \big) | \leq 2(1-\rho)^l\bar{r} \cr
  &+ \gamma  \bigg[\big\|   \sigma(\boldsymbol{\pi}(H),\cdot,\hat{a})- \sigma(\boldsymbol{\pi}(B),\cdot,\hat{a}) \big\|  _{TV} \frac{\bar{r}}{1-\gamma}+\delta-\frac{\delta}{2}\big\|   \sigma(\boldsymbol{\pi}(H),\cdot,\hat{a}) - \sigma(\boldsymbol{\pi}(B),\cdot,\hat{a}) \big\|  _{TV}\bigg]\cr
  &=  2(1-\rho)^l\bar{r} + \gamma \bigg[ \big\|   \sigma(\boldsymbol{\pi}(H),\cdot,\hat{a})- \sigma(\boldsymbol{\pi}(B),\cdot,\hat{a}) \big\|  _{TV}\bigg(\frac{\bar{r}}{1-\gamma} - \frac{\delta}{2}\bigg) + \delta\bigg].
\end{align}
Note that the above inequality holds for all $H.$

Furthermore, we can bound the total variation norm in the above relation as  
\begin{align}\nonumber
\big\|   \sigma(\boldsymbol{\pi}(H),\cdot,\hat{a})- \sigma(\boldsymbol{\pi}(B),\cdot,\hat{a}) \big\|  _{TV}&=\sum_y \big|   \sigma(\hat{a},y,\boldsymbol{\pi}(H)) - \sigma(\hat{a},y,\boldsymbol{\pi}(B)) \big| \cr
&= \sum_y  \big|  \sum_{s,s'}\Phi(y \mid  s') \mathcal{P}(s' \mid  s,\hat{a}) (\pi(s \mid  H) - \pi(s \mid  B))  \big| \cr
&\leq  \sum_s \big| \pi(s \mid  H) - \pi(s \mid  B)  \big| \sum_{y,s'}\Phi(y \mid  s') \mathcal{P}(s' \mid  s,\hat{a}) \cr
&=2\|\boldsymbol{\pi}(H)-\boldsymbol{\pi}(B)\|_{TV}
\leq 2(1-\rho)^l,
\end{align}
where the first inequality holds by the triangle inequality, and the second inequality is obtained using Lemma \ref{lemma: belief approx}. 
Now, we consider two cases. 
Case 1: $\frac{\bar{r}}{1-\gamma} - \frac{\delta}{2} \geq 0$

Therefore, we have
\begin{equation}
    \delta \leq  2(1-\rho)^l\bar{r} + \gamma \bigg[ \sup_{H} \big\|   \sigma(\boldsymbol{\pi}(H),\cdot,\hat{a})- \sigma(\boldsymbol{\pi}(B),\cdot,\hat{a}) \big\|  _{TV}\bigg(\frac{\bar{r}}{1-\gamma} - \frac{\delta}{2}\bigg) + \delta\bigg]
\end{equation}

Case 2: $\frac{\bar{r}}{1-\gamma} - \frac{\delta}{2} < 0$

Therefore, we have
\begin{equation}
    \delta \leq  2(1-\rho)^l\bar{r} + \gamma \bigg[ \inf_{H} \big\|   \sigma(\boldsymbol{\pi}(H),\cdot,\hat{a})- \sigma(\boldsymbol{\pi}(B),\cdot,\hat{a}) \big\|  _{TV}\bigg(\frac{\bar{r}}{1-\gamma} - \frac{\delta}{2}\bigg) + \delta\bigg]
\end{equation}

Now, both $\inf_{H} \big\|   \sigma(\boldsymbol{\pi}(H),\cdot,\hat{a})- \sigma(\boldsymbol{\pi}(B),\cdot,\hat{a}) \big\|  _{TV}$ and $\sup_{H} \big\|   \sigma(\boldsymbol{\pi}(H),\cdot,\hat{a})- \sigma(\boldsymbol{\pi}(B),\cdot,\hat{a}) \big\|  _{TV}$ are less than $2(1-\rho)^l$.

Therefore, we can write
\begin{align}
    \delta &\leq 2(1-\rho)^l\bar{r} + \gamma \bigg[ 2 (1-\rho)^l \bigg(\frac{\bar{r}}{1-\gamma} - \frac{\delta}{2}\bigg) + \delta\bigg] \nonumber \\
    &\leq \frac{ 2(1-\rho)^l\bar{r} }{1-\gamma} + \frac{\gamma}{1-\gamma} \bigg[ 2(1-\rho)^l\bigg(\frac{\bar{r}}{1-\gamma} - \frac{\delta}{2}\bigg)\bigg] \nonumber \\
    \delta \Big(1+\frac{\gamma (1-\rho)^l}{1-\gamma}\Big)&\leq \frac{ 2(1-\rho)^l\bar{r} }{1-\gamma}  + \frac{2\gamma(1-\rho)^l\bar{r}}{(1-\gamma)^2} \nonumber \\
   \delta & \leq\frac{2(1-\rho)^l\bar{r}}{(1-\gamma) (1+\frac{\gamma (1-\rho)^l}{1-\gamma}) } + \frac{2\gamma\bar{r}(1-\rho)^l}{(1-\gamma)(1-\gamma + \gamma(1-\rho)^l) } \nonumber \\
    &\leq \frac{2(1-\rho)^l\bar{r}}{(1-\gamma)} + \frac{2\gamma\bar{r}(1-\rho)^l}{(1-\gamma)(1-\gamma + \gamma(1-\rho)^l) }
\end{align}
\subsection{ Corollary \ref{Corollary: state comparison}}
Additionally, if $N$ is the number of states of the Superstate MDP. Then, we can state the following result:
\begin{corollary}
\label{Corollary: state comparison}
If $N$ is the number of states in the Superstate MDP, then the difference of the optimal value functions in Theorem \ref{thm: main result} can be upper-bounded as
    \begin{align*}
      \big\|   V^{\mu^*}\big(\boldsymbol{\pi}(H) \big) - \tilde{V}\big(\mathcal{G}(H) \big) \big\|  _{\infty}  
     \leq \frac{ 2\bar{r} (1-\rho)^{-1} N^{-\kappa} }{1-\gamma}  + \frac{ 4 \bar{r}\gamma (1-\rho)^{-1} N^{-\kappa} }{(1-\gamma)\big(2(1-\gamma) + \gamma N^{-\kappa} \big)}, 
    \end{align*}
    where $\kappa = \frac{\log (1/(1-\rho))}{\log ( |\mathcal{Y}| |\mathcal{A}| )}.$
\end{corollary}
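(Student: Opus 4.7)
The plan is to apply Theorem \ref{thm: main result} together with a simple counting argument relating $N$ to the truncation horizon $l$. First, I would bound the number of Superstates. Since any element of $\mathbb{H}_{\leq l}$ is an alternating sequence of at most $l$ actions and $l$ observations, the count is at most
\[
N \leq \sum_{k=0}^{l} (|\mathcal{A}||\mathcal{Y}|)^k \leq (|\mathcal{A}||\mathcal{Y}|)^{l+1},
\]
using $|\mathcal{A}||\mathcal{Y}| \geq 2$ together with the geometric sum formula. Taking logarithms and rearranging yields $l \geq \frac{\log N}{\log(|\mathcal{A}||\mathcal{Y}|)} - 1$.

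Next, since $1-\rho \in (0,1)$, this lower bound on $l$ produces
\[
(1-\rho)^l \;\leq\; (1-\rho)^{\tfrac{\log N}{\log(|\mathcal{A}||\mathcal{Y}|)}-1} \;=\; (1-\rho)^{-1}\, N^{-\kappa},
\]
where the last equality follows from $(1-\rho)^{\log N/\log(|\mathcal{A}||\mathcal{Y}|)} = N^{-\kappa}$ by the definition of $\kappa$.

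Finally, I would substitute this into the bound from Theorem \ref{thm: main result}. The first term $\frac{2\bar{r}(1-\rho)^l}{1-\gamma}$ becomes $\frac{2\bar{r}(1-\rho)^{-1}N^{-\kappa}}{1-\gamma}$ directly. For the second term, I would observe that $u \mapsto \frac{u}{(1-\gamma)+\gamma u}$ is monotonically increasing for $u>0$ (its derivative $\frac{1-\gamma}{((1-\gamma)+\gamma u)^2}$ is strictly positive), so the upper bound on $(1-\rho)^l$ can be substituted in both the numerator and the denominator to preserve the inequality. A short algebraic manipulation, followed by a mild further relaxation using $1-\rho \leq 2$ (which converts a denominator term of the form $(1-\gamma)+\gamma(1-\rho)^{-1}N^{-\kappa}$ into $\tfrac{1}{2}(2(1-\gamma)+\gamma N^{-\kappa})$ at the cost of doubling the numerator), converts the expression into the stated form. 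The entire argument is algebraic once Theorem \ref{thm: main result} is invoked; the main bookkeeping challenge is tracking constants through the various relaxations and ensuring each substitution moves in the correct direction to reproduce the exact constants appearing in the Corollary's bound.
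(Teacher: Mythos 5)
Your proposal is correct and follows essentially the same route as the paper: count the superstates via the geometric sum $\sum_{k=0}^{l}(|\mathcal{Y}||\mathcal{A}|)^k < (|\mathcal{Y}||\mathcal{A}|)^{l+1}$, take logarithms to get $(1-\rho)^l \leq (1-\rho)^{-1}N^{-\kappa}$, and substitute into Theorem \ref{thm: main result}. The only cosmetic difference is in handling the second term's denominator: the paper additionally invokes the lower bound $(1-\rho)^l > N^{-\kappa}$ (from $N > (|\mathcal{Y}||\mathcal{A}|)^{l}$) and substitutes it there, whereas you reach the same constants via monotonicity of $u \mapsto \frac{u}{(1-\gamma)+\gamma u}$ followed by the trivial relaxation $(1-\rho)^{-1} \geq \tfrac12$ --- both are valid and yield the stated bound.
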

\textbf{Proof:}\\
    For a fixed $l$, the number of superstates can be all possible histories of length at most $l$. Thus, 
    \begin{align}\nonumber
        N = 1 + |\mathcal{Y}||\mathcal{A}| + |\mathcal{Y}|^2 |\mathcal{A}|^2 + \ldots +|\mathcal{Y}|^l |\mathcal{A}|^{l} < |\mathcal{Y}|^{l+1} |\mathcal{A}|^{l+1},
    \end{align}
    which implies $l+1 > \frac{\log N}{\log ( \mid  \mathcal{Y} \mid \mid  \mathcal{A} \mid  )}$. Therefore, $(1-\rho)^l < (1-\rho)^{-1} \cdot N^{\frac{\log (1-\rho)}{\log ( \mid  \mathcal{Y} \mid  \mid  \mathcal{A} \mid  )}}$. Similarly, since $l < \frac{\log N}{\log ( \mid  \mathcal{Y} \mid   \mid  \mathcal{A} \mid  )}$, we have $(1-\rho)^l >  N^{\frac{\log (1-\rho)}{\log ( \mid  \mathcal{Y} \mid  \mid  \mathcal{A} \mid  )}}$. Substituting these relations into Eq. \eqref{eq: value fn bound} gives us the desired result.
   
\subsection{Proof of Lemma \ref{lemma: mixing}}

Let $d_1, d_2$ be distributions over the superstates and similarly $e_1, e_2$ be the canonical basis vectors in $\mathbb{R}^{|\mathbb{H}_{\leq l}}$. Then there exists $\alpha_{i,j}$ such that $\alpha_{i,j} \geq 0$ and $\sum_{i,j}\alpha_{i,j} = \| d_1-d_2\|_{TV}$ and the following holds 
\begin{align}
    \big\| \tilde{\mathcal{P}}^{\mu}d_1- \tilde{\mathcal{P}}^{\mu}d_2 \big\|_{TV} &= \| \tilde{\mathcal{P}}^{\mu}(d_1-d_2)\|_{TV} \nonumber \\
    &= \| \sum_{i,j} \alpha_{i,j} \tilde{\mathcal{P}}^{\mu}(e_i-e_j)\|_{TV} \nonumber \\
    &\leq \sum_{i,j} \alpha_{i,j} \| \tilde{\mathcal{P}}^{\mu}(e_i-e_j)\|_{TV}
\end{align}
Note that the maximum value of the above term is $\sum_{i,j}$ which is $\| d_1-d_2\|_{TV}$. Therefore, if for some $i,j$ we have $\alpha_{i,j} > 0$ and $\| \tilde{\mathcal{P}}^{\mu}(e_i-e_j)\|_{TV} < 1$, we are guaranteed a contraction.

Now suppose that $e_i$ corresponds to a superstate $B_i$ and $e_j$ corresponds to a superstate $B_j$
\begin{align}
    &\| \tilde{\mathcal{P}}^{\mu}(e_i-e_j)\|_{TV} \nonumber \\
    = 
    &\| \sum_a \mu(a\mid B_i)\sum_{y,s,s'} \mathbb{I}[\mathcal{G}\big(B_i \,\|\, \{y,a\}\big) = B']\cdot \Phi(y\mid s')\mathcal{P}(s' \mid s,a) \pi(s \mid B_i) \nonumber \\
    &- \sum_a \mu(a\mid B_j)\sum_{y,s,s'} \mathbb{I}[\mathcal{G}\big(B_j \,\|\, \{y,a\}\big) = B']\cdot \Phi(y\mid s')\mathcal{P}(s' \mid s,a) \pi(s \mid B_j)\|_{TV}
\end{align}
Next, we will show that if $B_i$ and $ B_j$ are two superstates which differ in the first two elements then $\| \tilde{\mathcal{P}}^{\mu}(B_i-B_j)\|_{TV} < 1$. 

Therefore, we focus on pairs of $B_i, B_j$ such that $\mathcal{G}\big(B_i \,\|\, \{y,a\}\big) = \mathcal{G}\big(B_j \,\|\, \{y,a\}\big)$, i.e., $B_i, B_j$ which only differ in the first two elements. For such a pair, we can simplify further
\begin{align}
    &\| \sum_a \mu(a\mid B_i)\sum_{y,s,s'} \mathbb{I}[\mathcal{G}\big(B_i \,\|\, \{y,a\}\big) = B']\cdot \Phi(y\mid s')\mathcal{P}(s' \mid s,a) \pi(s \mid B_i) \nonumber \\
    &- \sum_a \mu(a\mid B_j)\sum_{y,s,s'} \mathbb{I}[\mathcal{G}\big(B_j \,\|\, \{y,a\}\big) = B']\cdot \Phi(y\mid s')\mathcal{P}(s' \mid s,a) \pi(s \mid B_j)\|_{TV} \nonumber \\
    &= \|  \{\mu(a\mid B_i)\sum_{s,s'}  \Phi(y\mid s')\mathcal{P}(s' \mid s,a) \pi(s \mid B_i) - \mu(a\mid B_j)\sum_{s,s'} \Phi(y\mid s')\mathcal{P}(s' \mid s,a) \pi(s \mid B_j)\}_{y,a}\|_{TV} \nonumber \\
    &= 1/2\sum_{y,a} |\mu(a\mid B_i)\sum_{s,s'}  \Phi(y\mid s')\mathcal{P}(s' \mid s,a) \pi(s \mid B_i) - \mu(a\mid B_j)\sum_{s,s'} \Phi(y\mid s')\mathcal{P}(s' \mid s,a) \pi(s \mid B_j) | \nonumber \\
    &\leq 1/2\sum_{y,a} \Bigg[\Big|\mu(a\mid B_j)\Big[\sum_{s,s'}  \Phi(y\mid s')\mathcal{P}(s' \mid s,a) \pi(s \mid B_i) - \sum_{s,s'} \Phi(y\mid s')\mathcal{P}(s' \mid s,a) \pi(s \mid B_j)\Big] \Big| \nonumber \\
    &+ \Big | (\mu(a\mid B_i) - \mu(a\mid B_j)) \sum_{s,s'}  \Phi(y\mid s')\mathcal{P}(s' \mid s,a) \pi(s \mid B_i) \Big |\Bigg] \nonumber \\
    &\leq \| \pi(s \mid B_i) - \pi(s \mid B_j) \|_{TV} + \|\mu(a \mid B_i) - \mu(a \mid B_j)\|_{TV}  \nonumber \\
    &\leq (1-\rho)^l + 1-|\mathbb{A}|\delta \nonumber \\
\end{align}
For the last inequality we assume that our policy has a small exploration component such that $\mu(a\mid B) \geq \delta \ \ \forall \ B$. Therefore, for sufficiently large horizon length $l$ or exploration $\delta$, assumption 2 is automatically satisfied.

Therefore if $\alpha_{i,j}\ > 0$ for pairs of $B_i, B_j$ such that $\mathcal{G}\big(B_i \,\|\, \{y,a\}\big) = \mathcal{G}\big(B_j \,\|\, \{y,a\}\big)$ we are guaranteed a contraction

Next, we will show that we can construct an algorithm such that $\alpha_{i,j} > 0$ for all such pairs $B_i, B_j$.

To construct $\alpha_{i,j}$ the following greedy algorithm can be used.
Let \( v_1, v_2 \in \mathbb{R}^n \) be two probability distributions. Define the difference vector \( d = v_1 - v_2 \), and define:

\begin{algorithm}\caption{ A Greedy Algorithm to construct $\alpha$}
\begin{algorithmic}
\State\textbf{Input:} Two discrete distributions \( v_1 = (v_1(1), v_1(2), \dots, v_1(n)) \) and \( v_2 = (v_2(1), v_2(2), \dots, v_2(n)) \) 
\State Compute the difference vector \( \Delta = v_1 - v_2 \) where \( \Delta_i = v_1(i) - v_2(i) \)
 \State Define the surplus set \( S = \{ i \mid \Delta_i > 0 \} \) and the deficit set \( D = \{ j \mid \Delta_j < 0 \} \)
\State Initialize \( \alpha_{ij} = 0 \) for all \( i, j \) 
\State \textbf{For each} \( i \in S \) \textbf{do}
  \State \hspace{1em} \textbf{For each} \( j \in D \) \textbf{do}
    \State \hspace{2em}  \( \alpha_{ij} \gets \min(\Delta_i, -\Delta_j) \)
\State \hspace{2em}           \( \Delta_i \gets \Delta_i - \alpha_{ij} \)
\State \hspace{2em}           \( \Delta_j \gets \Delta_j + \alpha_{ij} \)
      \State \hspace{1em}      \textbf{end}
      \State      \textbf{end}
      \end{algorithmic}
\end{algorithm}
See that first $\alpha_{ij} = \min(\Delta_i, -\Delta_j)$. Therefore, since $\Delta_i, \Delta_j \neq 0$, $\alpha_{ij} > 0$. Additionally, the steps $\Delta_i \gets \Delta_i - \alpha_{ij}$, $\Delta_j \gets \Delta_j + \alpha_{ij} $ ensures that either one of them goes to 0 and is removed from the surplus/deficit set.

Additionally, it is straightforward to see that $\sum_i = \|d\|$. Thus, $\Delta_i, \Delta_j \neq 0 \implies \alpha_{ij} > 0$ 
    
\subsection{Proof of Lemma \ref{Lemma: Approx TD}}
We first introduce some notations that will be used to prove the result. Let us define 
\[g_t(\theta) := \big[r_t + \gamma \phi^T\big(\mathcal{G}(H_{t+1}),a_{t+1}\big)\theta - \phi^T\big(\mathcal{G}(H_t),a_t\big)\theta\big]\cdot\phi(\mathcal{G}\big(H_t),a_t\big),\] 
and note that it can be written in a compact form as $g_t(\theta) = \Phi R_t + \gamma \Phi E_t \Phi^T \theta - \Phi D_t \Phi^T \theta$, where\footnote{$[v]_{v \in V}$ denotes a matrix obtained by concatenating all vectors in $V$, where each $v$ is a column vector of the matrix.}
\begin{align*}
&D_t := \mathrm{diag}\left(\big[\mathbb{I}[(B_t,a_t) = (B,a)]\big]_{B \in \mathbb{H}_{\leq l}, a \in \mathcal{A}}\right),\cr
&R_t := \big[r_t \mathbb{I}[(B_t,a_t) = (B,a)]\big]^T_{B \in \mathbb{H}_{\leq l}, a \in \mathcal{A}}, \cr
&E_t\big((B,a),(B',a')\big) := \mathbb{I}[(B_t,a_t) = (B,a)] \cdot \mathbb{I}[(B_{t+1},a_{t+1}) = (B',a')],\cr
&\Phi = [\phi(B,a)]_{B \in \mathbb{H}_{\leq l}, a \in \mathcal{A}}.
\end{align*}
Additionally, let $\bar{g}(\theta) := \Phi \tilde{D}^{\mu} \tilde{r} + \gamma \Phi \tilde{D}^{\mu} \tilde{P}^{\mu} \Phi^T \theta - \Phi \tilde{D}^{\mu} \Phi^T \theta$, where $\tilde{D}^{\mu}$ and $\tilde{P}^{\mu}$ denote the stationary distribution and the state transition matrix for the Superstate MDP when following policy $\mu$, respectively. In particular, $\tilde{P}^{\mu}(B',a'\mid B,a) =\mu(a'\mid B) \tilde{\mathcal{P}}(B'\mid B,a)$, where $\tilde{\mathcal{P}}(B'\mid B,a)$ is given by \eqref{Eq: superstate transition prob}. Similarly $\tilde{r} := [r(B,a)]^T_{B \in \mathbb{H}_{\leq l}, a \in \mathcal{A}}$ is the vector of rewards corresponding to the Superstate MDP as defined in \eqref{Eq: superstate reward}. Finally, we also define $$\eta_t(\theta) = (\theta-\hat{\theta})^T (g_t(\theta) - \bar{g}(\theta)).$$

Next, we state and prove an auxiliary lemma that would be required for our main analysis.

\begin{lemma}\label{lemma: res1}
The following inequalities are true:\footnote{Unless stated, the norms are $L_2$-norms.}
\begin{itemize}
\item[(a)] $|\eta_t(\theta_t)| \leq C_1$, where $C_1=2R \cdot 2(\bar{r} + 2R).$  
\item[(b)] $\| g_t(\theta_1) - g_t(\theta_2) \| \leq C_2 \|\theta_1-\theta_2\| \ \forall \theta_1,\theta_2$, where $C_2=(2\bar{r} + 12R)$. 
\end{itemize}
\end{lemma}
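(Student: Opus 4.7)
The plan is to exploit the explicit affine form of $g_t(\theta)$ and $\bar{g}(\theta)$ given in the setup, together with the feature bound $\|\phi(B,a)\|_2 \leq 1$ and the projection constraint $\|\theta\|_2 \leq R$. Both parts then reduce to direct applications of Cauchy--Schwarz and the triangle inequality.

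For part (a), I would first use the fact that both $\theta_t$ and $\hat{\theta}$ belong to $\mathcal{B}(R)$ to conclude $\|\theta_t - \hat{\theta}\|_2 \leq 2R$. By Cauchy--Schwarz this gives $|\eta_t(\theta_t)| \leq 2R \cdot \|g_t(\theta_t) - \bar{g}(\theta_t)\|_2$, so it suffices to show $\|g_t(\theta_t)\|_2 \leq \bar{r} + 2R$ and $\|\bar{g}(\theta_t)\|_2 \leq \bar{r} + 2R$, then invoke the triangle inequality. For $g_t(\theta_t)$, the scalar TD error satisfies
\[
|r_t + \gamma \phi^T(\mathcal{G}(H_{t+1}),a_{t+1})\theta_t - \phi^T(\mathcal{G}(H_t),a_t)\theta_t| \leq \bar{r} + (1+\gamma) R \leq \bar{r} + 2R,
\]
using $|r_t|\leq \bar{r}$ and $|\phi^T\theta_t| \leq \|\phi\|_2 \|\theta_t\|_2 \leq R$, and multiplying by $\|\phi(\mathcal{G}(H_t),a_t)\|_2 \leq 1$ preserves this bound. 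For $\bar{g}$, since $\bar{g}(\theta) = \Phi \tilde{D}^{\mu}\tilde{r} + \gamma \Phi \tilde{D}^{\mu}\tilde{P}^{\mu}\Phi^T\theta - \Phi \tilde{D}^{\mu}\Phi^T\theta$ is a convex combination (under $\tilde{D}^{\mu}$) of terms of the same structure as $g_t$, the same bound $\bar{r} + 2R$ holds. Combining yields $|\eta_t(\theta_t)| \leq 2R \cdot 2(\bar{r}+2R) = C_1$.

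For part (b), I would use that $g_t(\theta) = \Phi R_t + (\gamma \Phi E_t \Phi^T - \Phi D_t \Phi^T)\theta$ is affine in $\theta$, so
\[
g_t(\theta_1) - g_t(\theta_2) = \big[\gamma\phi^T(\mathcal{G}(H_{t+1}),a_{t+1})(\theta_1-\theta_2) - \phi^T(\mathcal{G}(H_t),a_t)(\theta_1-\theta_2)\big]\phi(\mathcal{G}(H_t),a_t).
\]
Taking Euclidean norms and applying Cauchy--Schwarz with $\|\phi\|_2 \leq 1$ gives
\[
\|g_t(\theta_1)-g_t(\theta_2)\|_2 \leq (1+\gamma)\|\theta_1-\theta_2\|_2 \leq 2\|\theta_1-\theta_2\|_2,
\]
which is comfortably dominated by $C_2 = 2\bar{r} + 12R$.

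Neither part presents a real analytical obstacle; this lemma is essentially a bookkeeping step to produce explicit Lipschitz and boundedness constants that will be plugged into the stochastic approximation analysis used later. The only mild subtlety is ensuring a uniform bound on $\bar{g}(\theta_t)$, which I handle by viewing $\bar{g}$ as the expectation of $g_t$ under the stationary distribution of the Superstate MDP and applying Jensen's inequality to transfer the per-sample bound to the average.
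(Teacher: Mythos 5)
Your part (a) is exactly the paper's argument (Cauchy--Schwarz plus $\|\theta_t-\hat\theta\|\le 2R$ and $\|g_t(\theta_t)-\bar g(\theta_t)\|\le \|g_t(\theta_t)\|+\|\bar g(\theta_t)\|\le 2(\bar r+2R)$), with the only addition that you spell out the bound on $\bar g$ via an averaging/Jensen argument, which is fine.

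For part (b) there is a mismatch between what you prove and where the constant $C_2=2\bar r+12R$ actually comes from. You correctly show the sharper bound $\|g_t(\theta_1)-g_t(\theta_2)\|\le(1+\gamma)\|\theta_1-\theta_2\|\le 2\|\theta_1-\theta_2\|$, and then dismiss $C_2$ as a loose upper bound. Two issues. First, the "comfortable domination" $2\le 2\bar r+12R$ is not guaranteed by the standing assumptions (it needs $\bar r+6R\ge 1$), so as written your argument for the stated constant has a small hole, albeit a cosmetic one. Second, and more substantively, the reason the lemma carries the constant $2\bar r+12R$ is that the paper's proof of (b) does not stop at the Lipschitz bound on $g_t$: it also shows $\bar g$ is $2$-Lipschitz and then combines these with $\|g_t(\theta_1)-\bar g(\theta_1)\|\le 2(\bar r+2R)$ and $\|\theta_2-\hat\theta\|\le 2R$ to conclude
\begin{align*}
|\eta_t(\theta_1)-\eta_t(\theta_2)|
&\le \|g_t(\theta_1)-\bar g(\theta_1)\|\,\|\theta_1-\theta_2\| + \|\theta_2-\hat\theta\|\big(\|g_t(\theta_1)-g_t(\theta_2)\|+\|\bar g(\theta_1)-\bar g(\theta_2)\|\big)\\
&\le (2\bar r+12R)\,\|\theta_1-\theta_2\|,
\end{align*}
i.e., $C_2$ is the Lipschitz constant of $\eta_t$, not of $g_t$. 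This is the fact that is actually invoked downstream in the proof of Lemma \ref{Lemma: Approx TD}, where $\eta_t(\theta_t)\le \eta_t(\theta_{t-l'})+C_2(\bar r+2R)\sum_{i=t-l'}^{t-1}\epsilon_i$ is deduced from $\|\theta_t-\theta_{t-l'}\|\le(\bar r+2R)\sum_i\epsilon_i$. So while your proposal does establish the literal inequality stated in (b) (modulo the $\bar r+6R\ge 1$ caveat), it omits the $\eta_t$-Lipschitz estimate that the lemma is really meant to deliver; with only your version of (b), the later step in the TD-error analysis would not follow.
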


\textbf{Proof:}\\
To show part (a), using the Cauchy-Schwarz inequality, we have
\begin{align}\nonumber
    |\eta_t(\theta_t)| &\leq \|(\theta_t-\hat{\theta})\| \|(g_t(\theta_t) - \bar{g}(\theta_t))\|\leq 2R \cdot 2(\bar{r} + 2R).
\end{align}
To prove part (b), for simplicity let $B=\mathcal{G}(H_t), a_t=a$, and $B'=\mathcal{G}(H_{t+1}), a_{t+1}=a'$. Then,
        \begin{align}\nonumber
    \| g_t(\theta_1) - g_t(\theta_2) \| &= \| (\gamma \phi^T (B',a') - \phi^T(B,a))(\theta_1-\theta_2) \phi(B,a) \| \cr
    &\leq \|\theta_1-\theta_2\| \|\phi(B,a)\| \|\gamma \phi^T (B',a') - \phi^T(B,a)\|\leq 2 \|\theta_1-\theta_2\|
\end{align}
Similarly, we can show that $\| \bar{g}(\theta_1) - \bar{g}(\theta_2) \| \leq 2 \|\theta_1-\theta_2\|.$ Thus, 
\begin{align}\nonumber
    |\eta_t(\theta_1) - \eta_t(\theta_2)|
    &= |\big(g_t(\theta_1)-\bar{g}(\theta_1)\big)^T(\theta_1-\theta_2 + \theta_2 - \hat{\theta}) - \big((g_t(\theta_2) - \bar{g}(\theta_2))\big)^T (\theta_2-\hat{\theta}) | \cr
    &\leq \| g_t(\theta_1)-\bar{g}(\theta_1)\| \| \theta_1-\theta_2\| + \|\theta_2-\hat{\theta}\| (\|g_t(\theta_1)-g(\theta_2)\| + \| \bar{g}(\theta_1) - \bar{g}(\theta_2) \|) \cr
    &\leq (2\bar{r} + 12R) \|\theta_1-\theta_2\|. 
\end{align}

We are now ready to prove Lemma \ref{Lemma: Approx TD}. Let us consider the Lyapunov function  
\begin{align}\nonumber
\mathcal{L}(\theta) := \|\theta-\hat{\theta}\|^2. 
\end{align} 
In order to show that $\theta_t$ converges to $\hat{\theta}$, we will show that $\mathcal{L}(\theta_t)$ converges to $0$, and obtain finite time bounds for the convergence. To that end, we first relate the successive iterates $\mathcal{L}(\theta_t)$ and $\mathcal{L}(\theta_{t+1})$:
\begin{align}
      \mathcal{L}(\theta_{t+1}) &= \|\theta_{t+1}-\hat{\theta}\|^2 \nonumber \\
      &= \| \mathrm{Proj}(\theta_{t+1/2}) - \mathrm{Proj}(\hat{\theta}) \|^2 \nonumber \\
      &\leq \| \theta_{t+1/2}-\hat{\theta} \|^2 \nonumber \\
      &= \| \theta_t + \epsilon_t g_t(\theta_t) - \hat{\theta} \|^2 \nonumber \\
      &= \|\theta_t - \hat{\theta}\|^2 + \epsilon_t^2 \|g_t(\theta_t)\|^2 +2\epsilon_t g_t^T(\theta_t)(\theta_t-\hat{\theta}) \nonumber \\
      \label{eq: lyapunov bound1}
      &\leq \epsilon_t^2 (\bar{r} + 2R)^2 + \mathcal{L}(\theta_t) + 2 \epsilon_t g_t^T(\theta_t)(\theta_t-\hat{\theta}),
    \end{align}
    where the last step follows from $\|g_t(\theta_t)\| \leq \bar{r} + 2R$. By adding and subtracting $\bar{g}(\theta_t)$ in Eq. \eqref{eq: lyapunov bound1}, we get
\begin{align}\nonumber
    \mathcal{L}(\theta_{t+1}) \leq \mathcal{L}(\theta_t) +\epsilon_t^2 (\bar{r} + 2R)^2 + 2 \epsilon_t (\theta_t-\hat{\theta})^T \bar{g}(\theta_t) +2 \epsilon_t (\theta_t-\hat{\theta})^T (g_t(\theta_t) - \bar{g}(\theta_t)).
\end{align}

Next, we proceed to bound each of the terms in the above expression. We start by bounding $2 \epsilon_t (\theta_t-\hat{\theta})^T \bar{g}_t(\theta_t)$ in terms of $\mathcal{L}(\theta_t)$. We can write
\begin{align}\nonumber
    &2 \epsilon_t (\theta_t-\hat{\theta})^T \bar{g}_t(\theta_t) \cr
    &= 2 \epsilon_t (\theta_t - \hat{\theta})^T \Phi (\tilde{D}^{\mu}\tilde{r} + \gamma \tilde{D}^{\mu} \tilde{P}^{\mu} \Phi^T \theta_t - \tilde{D}^{\mu} \Phi^T \theta_t) \cr
    &=  2 \epsilon_t \big(\Phi^T(\theta_t - \hat{\theta})\big)^T \tilde{D}^{\mu} (\tilde{T}^{\mu}(\Phi^T \theta_t) - \tilde{T}^{\mu}(\Phi^T \hat{\theta}) ) + 2 \epsilon_t \big(\Phi^T(\theta_t - \hat{\theta})\big)^T \tilde{D}^{\mu} (\Phi^T \hat{\theta} - \Phi^T \theta_t) \cr
    &\leq 2\epsilon_t \| \Phi^T (\theta_t -\hat{\theta}) \|_{\tilde{D}^{\mu}} \| \tilde{T}^{\mu}(\Phi^T\theta_t) -\tilde{T}^{\mu}(\Phi^T\hat{\theta}) \|_{\tilde{D}^{\mu}}   -2 \epsilon_t \| \Phi^T (\theta_t -\hat{\theta}) \|_{\tilde{D}^{\mu}}^2 \cr
    &\leq 2\epsilon_t (\gamma-1) \| \Phi^T (\theta_t -\hat{\theta}) \|_{\tilde{D}^{\mu}}^2,
\end{align}
where the first inequality uses the Cauchy-Schwarz inequality and the second inequality follows from the contraction property.

Next, we will proceed to bound $\mathbb{E}[2 \epsilon_t (\theta_t-\hat{\theta})^T (g_t(\theta_t) - \bar{g}(\theta_t))]$. To that end, we will first relate $\eta_t(\theta_t)$ with $\eta_t(\theta_{t-l'})$. Note that since
\begin{align}\nonumber
    \| \theta_{t+1} - \theta_t \| 
    &= \|\textrm{Proj}(\theta_t-\epsilon_t g_t(\theta_t)) - \textrm{Proj}(\theta_t) \| \cr
    &\leq \| \epsilon_t g_t(\theta_t) \| = (\bar{r} + 2R) \epsilon_t,
\end{align}
we have $\| \theta_t - \theta_{t-l'} \| \leq (\bar{r} + 2R) \sum_{i=t-l'}^{t-1} \epsilon_i$. Therefore, using Lemma \ref{lemma: res1} (part b), we obtain $$\eta_t(\theta_t) \leq \eta_t(\theta_{t-l'}) + C_2 (\bar{r} + 2R) \sum_{i=t-l'}^{t-1} \epsilon_i.$$

Let $\mathcal{F}_{t-l'} = \{y_0,a_0,r_0,y_1,\ldots,y_t,a_t,r_t,y_{t+1},a_{t+1}\}$ be the filtration up to time $t-l'$, such that conditioned on $\mathcal{F}_{t-l'}$, $\theta_{t-l'}$ is measurable and  deterministic. We can now obtain an upper bound on $\mathbb{E}[\eta_t(\theta_{t-l'})]$ as follows:
\begin{align}\label{eq:eta-theta}
  \mathbb{E}[\eta_t(\theta_{t-l'})]
  &= \mathbb{E}\big[\mathbb{E}[\eta_t(\theta_{t-l'}) \mid \mathcal{F}_{t-l'}]\big] \nonumber \\
  &= \mathbb{E}\big[\mathbb{E}\big[\big(\Phi^T(\theta_{t-l'}-\hat{\theta})\big)^T (R_t - \tilde{D}^{\mu} \tilde{r}) \mid \mathcal{F}_{t-l'} \big]\big] \nonumber \\  &+\mathbb{E}\big[\mathbb{E}\big[\big(\Phi^T(\theta_{t-l'}-\hat{\theta})\big)^T (\gamma E_t \Phi^T \theta_{t-l'}- \gamma \tilde{D}^{\mu} \tilde{P}^{\mu} \Phi^T \theta_{t-l'}) \mid \mathcal{F}_{t-l'} \big]\big] \nonumber \\
  &+\mathbb{E}\big[\mathbb{E}\big[\big(\Phi^T(\theta_{t-l'}-\hat{\theta})\big)^T (\tilde{D}^{\mu} \Phi^T \theta_{t-l'} - D_t \Phi^T \theta_{t-l'}) \mid \mathcal{F}_{t-l'} \big]\big] \nonumber \\
  &= \mathbb{E}\big[\mathbb{E}\big[\big(\Phi^T(\theta_{t-l'}-\hat{\theta})\big)^T (R_t - D_t\bar{r})+ \big(\Phi^T(\theta_{t-l'}-\hat{\theta})\big)^T(D_t\bar{r}-\tilde{D}^{\mu} \tilde{r})  \mid \mathcal{F}_{t-l'} \big]\big] \nonumber \\  &+\mathbb{E}\big[\mathbb{E}\big[\big(\Phi^T(\theta_{t-l'}-\hat{\theta})\big)^T (\gamma E_t \Phi^T \theta_{t-l'}- \gamma \tilde{D}^{\mu} \tilde{P}^{\mu} \Phi^T \theta_{t-l'}) \mid \mathcal{F}_{t-l'} \big]\big] \nonumber \\
&+\mathbb{E}\big[\mathbb{E}\big[\big(\Phi^T(\theta_{t-l'}-\hat{\theta})\big)^T (\tilde{D}^{\mu} \Phi^T \theta_{t-l'} - D_t \Phi^T \theta_{t-l'}) \mid \mathcal{F}_{t-l'} \big]\big] \nonumber \\
  &\leq 2R\bar{r}(1-\rho)^l + \mathbb{E}\big[\mathbb{E}\big[\big(\Phi^T(\theta_{t-l'}-\hat{\theta})\big)^T(D_t\bar{r}-\tilde{D}^{\mu} \tilde{r}) \mid \mathcal{F}_{t-l'} \big]\big] \nonumber \\
 &+\mathbb{E}\big[\mathbb{E}\big[\big(\Phi^T(\theta_{t-l'}-\hat{\theta})\big)^T (\gamma E_t \Phi^T \theta_{t-l'}- \gamma \tilde{D}^{\mu} \tilde{P}^{\mu} \Phi^T \theta_{t-l'}) \mid \mathcal{F}_{t-l'} \big]\big] \nonumber \\
 &+\mathbb{E}\big[\mathbb{E}\big[\big(\Phi^T(\theta_{t-l'}-\hat{\theta})\big)^T (\tilde{D}^{\mu} \Phi^T \theta_{t-l'} - D_t \Phi^T \theta_{t-l'}) \mid \mathcal{F}_{t-l'} \big]\big], 
\end{align}
where the inequality in \eqref{eq:eta-theta} holds because using a similar argument as in \eqref{eq: reward bound}, for all $\mathcal{G}(H_t) = B$ and $a_t = a$,  we have $r_t - D_t \tilde{r} = \sum_s r(s,a) \pi(s \mid H_t) - \pi(s \mid B) \leq 2\bar{r} (1-\rho)^l$.

Next, let $P^{\mu}_t$ denote the true probability transition matrix of the POMDP, i.e.,
\begin{align*}
P^{\mu}_t(B',a'\mid a,\mathcal{G}(H_t)=B) = \mu(a'\mid B)\sum_{y,s,s'} \mathbb{I}[\mathcal{G}\big(B \,\|\, \{y,a\}\big) = B'] \Phi(y\mid s')\mathcal{P}(s' \mid s,a) \pi(s \mid H_t).  
\end{align*}
We can bound the first term $\mathbb{E}\big[\mathbb{E}\big[\big(\Phi^T(\theta_{t-l'}-\hat{\theta})\big)^T(D_t\bar{r}-\tilde{D}^{\mu} \tilde{r}) \mid \mathcal{F}_{t-l'} \big]\big]$ in \eqref{eq:eta-theta} as follows:
\begin{align}\label{eq:recursive}
    & \mathbb{E}\big[\mathbb{E}\big[\big(\Phi^T(\theta_{t-l'}-\hat{\theta})\big)^T\big(  D_t \tilde{r}  -  \tilde{D}^{\mu} \tilde{r}\big) \mid \mathcal{F}_{t-l'} \big]\big]\nonumber \\
   &=  \mathbb{E}\big[\big(\Phi^T(\theta_{t-l'}-\hat{\theta})\big)^T\big(\mathbb{E}[  D_t \tilde{r}  - \tilde{P}^{\mu} \tilde{D}^{\mu} \tilde{r} \mid \mathcal{F}_{t-l'}]\big) \big]\nonumber \\
   &=   \mathbb{E}\big[\big(\Phi^T(\theta_{t-l'}-\hat{\theta})\big)^T\big(\mathbb{E}[   (D_t \tilde{r}  - \tilde{P}^{\mu}D_{t-1} \tilde{r}) \mid \mathcal{F}_{t-l'} ]+ \mathbb{E}[  (\tilde{P}^{\mu}D_{t-1} \tilde{r} - \tilde{P}^{\mu} \tilde{D}^{\mu}\tilde{r}) \mid \mathcal{F}_{t-l'}]\big)\big] \nonumber \\
   &=   \mathbb{E}\big[\big(\Phi^T(\theta_{t-l'}-\hat{\theta})\big)^T\big(\mathbb{E}\big[\mathbb{E}[   (D_t \tilde{r}  - \tilde{P}^{\mu}D_{t-1} \tilde{r})  \mid\mathcal{F}_{t-1}] \mid \mathcal{F}_{t-l'}\big]+ \mathbb{E}[  (\tilde{P}^{\mu}D_{t-1} \tilde{r} - \tilde{P}^{\mu} \tilde{D}^{\mu}\tilde{r}) \mid \mathcal{F}_{t-l'} ] \big)\big] \nonumber \\
   &\leq \mathbb{E}\big[\| \Phi^T(\theta_{t-l'}-\hat{\theta}) \|_{\infty} \| \mathbb{E}\big[   (P^{\mu}_{t}D_{t-1} \tilde{r}  - \tilde{P}^{\mu}D_{t-1} \tilde{r})   \mid \mathcal{F}_{t-l'}\big]+ \mathbb{E}[  (\tilde{P}^{\mu}D_{t-1} \tilde{r} - \tilde{P}^{\mu} \tilde{D}^{\mu}\tilde{r}) \mid \mathcal{F}_{t-l'} ] \|_1 \big] \nonumber \\
   &\leq 2R \bar{r}(1-\rho)^l + \mathbb{E}[  (\tilde{P}^{\mu}D_{t-1} \tilde{r} - \tilde{P}^{\mu} \tilde{D}^{\mu}\tilde{r}) \mid \mathcal{F}_{t-l'} ] \|_1 \big]\cr
    &\leq 2R \bar{r}(1-\rho)^l + R (1-\rho') \| \mathbb{E}[D_{t-1} \tilde{r} - \tilde{D}^{\mu} \tilde{r} \mid \mathcal{F}_{t-l'}]\|_1,
\end{align}
where the first inequality is derived using the Holder's inequality, and the second inequality holds because
\begin{align}
    &\| \mathbb{E}\big[   (P_{t}^{\mu}D_{t-1} \tilde{r}  - \tilde{P}^{\mu}D_{t-1} \tilde{r})   \mid  \mathcal{F}_{t-l'}\big] \|_1 \nonumber \\
    &\leq \mathbb{E}\big[\|(P_{t}^{\mu}D_{t-1} \tilde{r}  - \tilde{P}^{\mu}D_{t-1} \tilde{r}) \|_1 \mid  \mathcal{F}_{t-l'} \big] \nonumber \\
    &= \sum_{a'}\sum_{B'} |\sum_a \sum_B (P_t^{\mu}(B',a' \mid B,a) - \tilde{P}^{\mu}(B',a' \mid B,a))\cdot\big(\mathbb{I}[(B_{t-1},a_{t-1}) = (B,a)] r(B,a)\big)| \nonumber \\
    &\leq 2 \bar{r} \|P^{\mu}_t-\tilde{P}^{\mu}\|_{TV} \nonumber 
    \end{align}
    \begin{align}
    &\leq \bar{r}\sum_{a'}\sum_{B'} |\mu(a \mid B)\sum_{y,s,s'} \big(\mathbb{I}[\mathcal{G}(B \,\|\, \{y,a\} = B'] \Phi(y\mid s')\big) \cdot \big(\mathcal{P}(s' \mid s,a) (\pi(s \mid B) - \pi(s \mid H_t)) \big) | \nonumber \\
    &=  \bar{r}\sum_{B'} |\sum_{y,s,s'} \big(\mathbb{I}[\mathcal{G}(B \,\|\, \{y,a\} = B'] \Phi(y\mid s')\big) \cdot \big(\mathcal{P}(s' \mid s,a) (\pi(s \mid B) - \pi(s \mid H_t)) \big) | \nonumber \\
    &\leq 2\bar{r}(1-\rho)^l,\nonumber
\end{align}
where the last inequality holds using Lemma \ref{lemma: mixing}.
Therefore, solving Eq. \eqref{eq:recursive} recursively, we get
\begin{align}\nonumber
\mathbb{E}\big[\mathbb{E}\big[\big(\Phi^T(\theta_{t-l'}-\hat{\theta})\big)^T\big(  D_t \tilde{r}  -  \tilde{D}^{\mu} \tilde{r} \mid \mathcal{F}_{t-l'}\big) \big]\big] \leq \bar{r} R \Big(2(1-\rho)^l + 2\frac{(1-\rho)^l}{\rho'} + (1-\rho')^{l'} \Big) \ \forall t. 
\end{align}
Similarly, since $\|\Phi^T \theta_{t-l}\|_{\infty} \leq R$, we can bound the sum of the last two terms in \eqref{eq:eta-theta} as 
\begin{align}\nonumber
&\mathbb{E}\big[\mathbb{E}\big[\big(\Phi^T(\theta_{t-l'}-\hat{\theta})\big)^T (\gamma E_t \Phi^T \theta_{t-l'}- \gamma \tilde{D}^{\mu} \tilde{P}^{\mu} \Phi^T \theta_{t-l'}) \mid \mathcal{F}_{t-l'} \big]\big] \cr
&+\mathbb{E}\big[\mathbb{E}\big[\big(\Phi^T(\theta_{t-l'}-\hat{\theta})\big)^T (\tilde{D}^{\mu} \Phi^T \theta_{t-l'} - D_t \Phi^T \theta_{t-l'}) \mid \mathcal{F}_{t-l'} \big]\big] \cr
    &\leq (1+\gamma(1-\rho')) R\Big((1-\rho')^{l'}+2\frac{(1-\rho)^l}{\rho'}\Big).
\end{align}
Therefore, putting everything together, for a constant stepsize $\epsilon$, we have 
\begin{align}\nonumber
    \mathbb{E}[\mathcal{L}(\theta_{t+1})] &\leq (1-2\epsilon+2\epsilon\gamma) \mathbb{E}[\mathcal{L}(\theta_t)] + \epsilon^2(\bar{r} + 2R)^2 + 2\epsilon R\bar{r}(1-\rho)^l \cr
    &+ 2\epsilon(R\bar{r} + R^2(1+(1-\rho')\gamma))\Big((1-\rho')^{l'} + 2\frac{(1-\rho)^l}{\rho'}\Big) + 2C_2 (\bar{r} + 2R) l'\epsilon^2.
\end{align}
Using the above relation recursively, we have
\begin{align}\nonumber
    \mathbb{E}\|\theta_{\tau+l'}-\hat{\theta} \|_2 &\leq (1-2\epsilon+2\epsilon\gamma)^{\tau} \|\theta_{l'}-\hat{\theta} \|_2 + \Big[\frac{1-(1-2\epsilon+2\epsilon\gamma)^{\tau}}{2\epsilon(1-\gamma)}\Big]\cdot \Big[\epsilon^2(\bar{r} + 2R)^2+4\epsilon R\bar{r}(1-\rho)^l \cr
    &+ 2C_2 (\bar{r} + 2R) l'\epsilon^2+ 2\epsilon(R\bar{r} + R^2(1+(1-\rho')\gamma))\Big((1-\rho')^{l'} + 2\frac{(1-\rho)^l}{\rho'}\Big)\Big].
\end{align}
Therefore,
\begin{align}\nonumber
    \mathbb{E}\| \bar{Q}^{\mu}_{\tau+l'} - \tilde{Q}^{\mu} \|_{\infty} &\leq \|\Phi \hat{\theta} - \tilde{Q}^{\mu} \|_{\infty} + (1-2\epsilon+2\epsilon\gamma)^{\tau} \|\theta_{l'}-\hat{\theta} \|_2 \cr
    &+ \Big[\frac{1-(1-2\epsilon+2\epsilon\gamma)^{\tau}}{2\epsilon(1-\gamma)}\Big] 
    \cdot \Big[\epsilon^2(\bar{r} + 2R)^2 + 4C_2 (\bar{r} + 2R) l'\epsilon^2 + 4\epsilon R\bar{r}(1-\rho)^l  \cr
    &+ 2\epsilon(R\bar{r} + R^2(1+(1-\rho')\gamma))\Big((1-\rho')^{l'} + 2\frac{(1-\rho)^l}{\rho'}\Big)\Big].
\end{align}
Finally, by choosing $\epsilon = \frac{1}{\sqrt{\tau}}$ and $l' = \frac{\log \tau}{2 \log(1-\rho')}$ for sufficiently large $\tau$ such that $2\epsilon(1-\gamma) < 1$, we get
\begin{align}\nonumber
   \mathbb{E}\| \bar{Q}^{\mu}_{\tau+l'} &- \tilde{Q}^{\mu} \|_{\infty} \leq \|\Phi \hat{\theta} - \tilde{Q}^{\mu} \|_{\infty} + (1-\frac{2(1-\gamma)}{\sqrt{\tau}})^{\tau} \|\theta_l-\hat{\theta} \|_2 \cr
    &+ \Big[\frac{1-(1-2(1-\gamma)/\sqrt{\tau})^{\tau}}{(1-\gamma)}\Big] 
    \cdot \Big[\frac{(\bar{r} + 2R)^2}{2\sqrt{\tau}} + \frac{C_2 (\bar{r} + 2R) \log \tau}{ \log(1-\rho')\sqrt{\tau}} +  2R\bar{r}(1-\rho)^l  \cr
    &+ \frac{2}{\rho'}\big(1-\rho\big)^l\big(R\bar{r} + R^2(1+(1-\rho)\gamma)\big) + (1-\rho')^{ \frac{\log \tau}{2 \log(1-\rho')}}\big(R\bar{r} + R^2(1+(1-\rho)\gamma)\big)\Big] \cr
    &\!:= \xi_{\textrm{TD-Error}}
\end{align}

\subsection{Proof of Theorem \ref{thm: fin thm}}

Before we prove the regret bound for our policy iteration algorithm, we first mention an important result from the online learning literature, which will be used to derive the result.

Consider a game between two players consisting of \( M \) rounds. At the beginning of each round \( i \), the environment chooses a loss function \( l_i:\mathcal{A}\to [0,1] \), while the learner selects an action \( a_i \in \mathcal{A} \). After both choices are made, the learner observes the loss \( l_i(a_i) \), while the environment observes \( a_i \). The learner's goal is to minimize its regret with respect to a fixed action \( a^{*}\), which is defined as
\[
\bar{\mathcal{R}}_{M} = \sum_{i=1}^{M} \left( l_i(a_i) - l_i(a^*) \right).
\]
The following lemma provides a high probability bound on \( \bar{\mathcal{R}}_M \). 
\begin{lemma}
\label{lemma: game}
   \cite{Bianchi} For the game mentioned above, assume that at round $i$ the learner chooses an action $a_i=a$ with probability $\mu_i(a) \propto \exp(-\eta \sum_{j=1}^{i-1} l_{j}(a))$, where $\eta = \sqrt{8\log |\mathcal{A}|/M}$. Moreover, let $\delta\in (0,1)$ and $a^{*}\in \mathcal{A}$ be an arbitrary fixed action. Then regardless of how the environment plays, with probability at least $1-\delta$, we have
   \begin{align*}
       \bar{\mathcal{R}}_{M} \leq \sqrt{M \log |\mathcal{A}|/2} + \sqrt{M \log (1/\delta)/2}.
   \end{align*}
\end{lemma}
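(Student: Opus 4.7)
The plan is to decompose the regret into an expected (pseudo-)regret term plus a martingale deviation term, bound each separately, and combine by a union bound (actually, the pseudo-regret bound is deterministic, so no union bound is even needed).

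Specifically, I would write
\begin{align*}
\bar{\mathcal{R}}_M = \underbrace{\sum_{i=1}^M\bigl(l_i(a_i) - \mathbb{E}_{a\sim\mu_i}[l_i(a)]\bigr)}_{=:S_M} + \underbrace{\sum_{i=1}^M\bigl(\mathbb{E}_{a\sim\mu_i}[l_i(a)] - l_i(a^*)\bigr)}_{=:\widetilde{\mathcal{R}}_M},
\end{align*}
so it suffices to bound $\widetilde{\mathcal{R}}_M$ deterministically and $S_M$ with high probability.

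For $\widetilde{\mathcal{R}}_M$ I would run the standard exponential-weights potential-function analysis. Let $L_{i,a}=\sum_{j=1}^{i}l_j(a)$ and define $\Phi_i=\sum_{a\in\mathcal{A}}\exp(-\eta L_{i,a})$, with $\Phi_0=|\mathcal{A}|$. The definition of $\mu_i$ gives
\begin{align*}
\frac{\Phi_i}{\Phi_{i-1}} = \sum_{a\in\mathcal{A}} \mu_i(a)\exp(-\eta\,l_i(a)).
\end{align*}
Since $l_i(a)\in[0,1]$, Hoeffding's lemma applied to the random variable $l_i(\cdot)$ under $\mu_i$ gives $\log(\Phi_i/\Phi_{i-1}) \le -\eta\,\mathbb{E}_{a\sim\mu_i}[l_i(a)] + \eta^2/8$. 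Telescoping and using $\Phi_M \ge \exp(-\eta L_{M,a^*})$ yields
\begin{align*}
\widetilde{\mathcal{R}}_M \le \frac{\log|\mathcal{A}|}{\eta} + \frac{\eta M}{8},
\end{align*}
which at the stated $\eta=\sqrt{8\log|\mathcal{A}|/M}$ gives $\widetilde{\mathcal{R}}_M\le\sqrt{M\log|\mathcal{A}|/2}$.

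For $S_M$, let $\mathcal{F}_i=\sigma(a_1,\ldots,a_i)$. Because $\mu_i$ depends only on $l_1,\ldots,l_{i-1}$ (each of which is chosen before $a_i$) and hence is $\mathcal{F}_{i-1}$-measurable, the increments $X_i := l_i(a_i)-\mathbb{E}_{a\sim\mu_i}[l_i(a)]$ form a martingale difference sequence with respect to $\{\mathcal{F}_i\}$, and $|X_i|\le 1$ since $l_i\in[0,1]$. Azuma–Hoeffding then gives
\begin{align*}
\mathbb{P}\bigl(S_M \ge t\bigr) \le \exp\bigl(-t^2/(2M)\bigr),
\end{align*}
and choosing $t=\sqrt{M\log(1/\delta)/2}$ makes the right-hand side equal to $\delta$. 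Combining the two bounds yields the claim.

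The only subtle point—and the one I would be most careful with—is the measurability/adaptivity needed to apply Azuma: one must verify that $l_i$ is chosen before $a_i$ (standard in this formulation) so that $\mu_i$ is indeed $\mathcal{F}_{i-1}$-measurable and $l_i(a_i)$ has conditional mean $\mathbb{E}_{a\sim\mu_i}[l_i(a)]$ given $\mathcal{F}_{i-1}$. Once this is established, both pieces fit together with no further technical work, and the constants $\sqrt{M\log|\mathcal{A}|/2}$ and $\sqrt{M\log(1/\delta)/2}$ arise directly from the choice $\eta=\sqrt{8\log|\mathcal{A}|/M}$ and the Azuma bound, respectively.
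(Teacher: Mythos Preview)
The paper does not actually prove this lemma; it is quoted from \cite{Bianchi} and used as a black box in the proof of Theorem~\ref{thm: fin thm}. So there is no ``paper's own proof'' to compare against. Your decomposition into pseudo-regret $\widetilde{\mathcal{R}}_M$ plus a martingale fluctuation $S_M$, followed by the exponential-weights potential argument and an Azuma--Hoeffding bound, is exactly the standard proof (and is what Cesa-Bianchi and Lugosi do).

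There is, however, a constant-factor slip in your Azuma step. From $|X_i|\le 1$ the standard Azuma inequality gives $\mathbb{P}(S_M\ge t)\le \exp(-t^2/(2M))$, and plugging $t=\sqrt{M\log(1/\delta)/2}$ into that yields $\delta^{1/4}$, not $\delta$. To recover the stated constant $\sqrt{M\log(1/\delta)/2}$ you need the sharper range-based form of Azuma--Hoeffding: condition on $\mathcal{F}_{i-1}$ enlarged to include the adversary's choice $l_i$ (which is made before $a_i$), so that $l_i(a_i)$ lies almost surely in an interval of length at most $1$. Then the bound is $\mathbb{P}(S_M\ge t)\le\exp(-2t^2/M)$, and $t=\sqrt{M\log(1/\delta)/2}$ indeed gives $\delta$. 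Once you state the correct version of Azuma and adjust the filtration accordingly, the proof goes through as written.
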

We will now use this result to prove the regret in Theorem \ref{thm: fin thm}. Using the definition of regret, we have
   \begin{align}\label{eq:regret-proof}
    \mathcal{R}_{T} &= (\tau+l')\sum_{i=1}^{M} \mathbb{E}\Big[V^{\mu^*}(\boldsymbol{\pi}(H_0)) - \tilde{V}^{\mu_i}(\mathcal{G}(H_0))\Big] \cr
    &= (\tau+l')\sum_{i=1}^{M} \mathbb{E}\Big[V^{\mu^*}(\boldsymbol{\pi}(H_0)) - \tilde{V}(\mathcal{G}(H_0))\Big] + (\tau+l')\sum_{i=1}^{M} \mathbb{E}\Big[\tilde{V}(\mathcal{G}(H_0)) - \tilde{V}^{\mu_i}(\mathcal{G}(H_0))\Big]\cr
    &\leq M(\tau+l') \xi^{\textrm{SMDP}}_{\textrm{POMDP}} +(\tau+l')\sum_{i=1}^{M} \mathbb{E}\Big[\tilde{V}(\mathcal{G}(H_0)) - \tilde{V}^{\mu_i}(\mathcal{G}(H_0))\Big]\cr,
\end{align}
where the inequality is obtained using Theorem \ref{thm: main result}. 
Therefore, using the Performance Difference Lemma \cite{kakade2002approximately}, for any $i=1,\ldots,M$, we have
\begin{align}\nonumber
   \mathbb{E}\big[\tilde{V}(\mathcal{G}(H_0)) - \tilde{V}^{\mu_i}(\mathcal{G}(H_0))\big]&= \mathbb{E}\Big[\mathbb{E}_{ a \sim \tilde{\mu}, a' \sim \mu_i}\big[\tilde{Q}^{\mu_i}(\mathcal{G}(H_0),a)-\tilde{Q}^{\mu_i}(\mathcal{G}(H_0),a')\big]\Big] \cr
     &=  \mathbb{E}\big[ \mathbb{E}_{a \sim \tilde{\mu}, a' \sim \mu_i}\big[ \bar{Q}_{\tau+l'}^{\mu_i}(\mathcal{G}(H_0),a) - \bar{Q}_{\tau+l'}^{\mu_i}(\mathcal{G}(H_0),a')\big]\Big] \cr
     &+ \mathbb{E}\Big[\mathbb{E}_{a \sim \tilde{\mu}, a' \sim \mu_i}\big[\tilde{Q}^{\mu_i}(\mathcal{G}(H_0),a) - \bar{Q}_{\tau+l'}^{\mu_i}(\mathcal{G}(H_0),a)\big]\Big]\cr
     &+ \mathbb{E}\Big[\mathbb{E}_{a \sim \tilde{\mu}, a' \sim \mu_i}\big[\bar{Q}_{\tau+l'}^{\mu_i}(\mathcal{G}(H_0),a') - \tilde{Q}^{\mu_i}(\mathcal{G}(H_0),a') \big]\Big]. 
     \end{align}
      Next, using Lemma \ref{Lemma: Approx TD}, we know that $\mathbb{E}[\|\bar{Q}_{\tau+l'}^{\mu_i}- \tilde{Q}^{\mu_i}\|_{\infty}] \leq \xi_{\textrm{TD-error}} \ \forall i$. Therefore, we can write  
     \begin{align}\label{eq:inner-q-product}
     &(\tau+l')\sum_{i=1}^{M}\mathbb{E}\Big[\tilde{V}(\mathcal{G}(H_0)) - \tilde{V}^{\mu_i}(\mathcal{G}(H_0))\Big]  \cr
     &=  (\tau + l')\sum_{i=1}^{M} \mathbb{E}\Big[\mathbb{E}_{a \sim \tilde{\mu}, a' \sim \mu_i}\big[ \bar{Q}_{\tau+l'}^{\mu_i}(\mathcal{G}(H_0),a) - \bar{Q}_{\tau+l'}^{\mu_i}(\mathcal{G}(H_0),a')\big]\Big] +2(\tau+l')M\xi_{\textrm{TD-error}} \cr
     &= (\tau + l')\mathbb{E}\Big[\sum_{i=1}^{M} \Big(\big\langle \mu_i(\cdot \mid \mathcal{G}(H_0)), \bar{Q}^{\mu_i}_{\tau+l'}(\mathcal{G}(H_0),\cdot)\big\rangle - \big\langle\tilde{\mu}(\cdot \mid \mathcal{G}(H_0)), \ \bar{Q}^{\mu_i}_{\tau+l'}(\mathcal{G}(H_0),\cdot)\big\rangle\Big)\Big]\cr
     &\qquad+2M(\tau+l')\xi_{\textrm{TD-error}},
\end{align}
where the last equality follows from the linearity of expectation and expanding the inner expectation. Now, we can apply Lemma \ref{lemma: game} to upper-bound \eqref{eq:inner-q-product}. To this end, we can think of a game between the adversary and a player, which occurs over $M$ rounds. In each round $i$, the adversary chooses the loss function $l_i(\cdot) := \bar{Q}^{\mu_i}_{\tau+l'}(\mathcal{G}(H_0), \cdot)$,
\footnote{The Q-function is upper-bounded by $R$ since $\|\theta\|_2 \leq R$ and $\|\phi(B, a)\| \leq 1$. This only scales the regret bound in Lemma \ref{lemma: game} by a factor of $R$.} and the player selects an action $a_i \in \mathcal{A}$ with probability $\mu_i(a_i|\mathcal{G}(H_0))$, which due to the structure of the policy updates in Algorithm 2 follows the same exponential update rule as in Lemma \ref{lemma: game}. Moreover, since any MDP (and in particular the Superstate MDP) admits a deterministic stationary policy \cite{bertsekas}, it follows that $\tilde{\mu}$ is a deterministic policy that always selects an optimal fixed action $a^*$, i.e., 
\[
\tilde{\mu}(a^*|\mathcal{G}(H_0)) = 1 \quad \text{and} \quad \tilde{\mu}(a|\mathcal{G}(H_0)) = 0 \quad \forall a \neq a^*.
\]
By applying Lemma \ref{lemma: game}, we conclude that the expectation of the inner product in \eqref{eq:inner-q-product} is upper-bounded by 
\[
R\left(M\delta + \sqrt{\frac{M \log |\mathcal{A}|}{2}} + \sqrt{\frac{M \log (1/\delta)}{2}}\right).
\]
Substituting this bound into \eqref{eq:inner-q-product} and combining it with \eqref{eq:regret-proof}, we obtain 

\begin{align}\nonumber 
\mathcal{R}_T &\leq (\tau + l')R\left(M\delta + \sqrt{\frac{M \log |\mathcal{A}|}{2}} + \sqrt{\frac{M \log (1/\delta)}{2}}\right) + M(\tau + l')\left(2\xi_{\textrm{TD-error}} + \xi^{\textrm{SMDP}}_{\textrm{POMDP}}\right) \cr &= (\tau + l')R\left(\sqrt{M} + \sqrt{\frac{M \log |\mathcal{A}|}{2}} + \sqrt{\frac{M \log M}{4}}\right) + T\left(2\xi_{\textrm{TD-error}} + \xi^{\textrm{SMDP}}_{\textrm{POMDP}}\right), \end{align} where in the second step, we have chosen $\delta = 1/\sqrt{M}$ and used the fact that $M(\tau + l') = T$. Finally, by choosing $\tau + l' = \sqrt{T}$ and substituting the values for $\xi_{\textrm{TD-error}}$ and $\xi^{\textrm{SMDP}}_{\textrm{POMDP}}$, we obtain the desired result.

\medskip
\subsection{Improving Bounds in \cite{mahajan} using Lemma \ref{lemma: norm bound}}
We state an important lemma from \cite{mahajan}. For additional details, the reader is referred to \cite{mahajan}. Before stating the lemma, we first present the definition of the approximate information state, as defined in \cite{mahajan}. Note that we have simplified the definition for the case of POMDPs.
\begin{definition}
    Define $z \in \mathbb{Z}$ to be an approximate information state and $\sigma: \mathbb{H} \to \mathbb{Z}$ to be the approximate information state generator. Further, define $\hat{r}: \mathbb{Z} \times \mathcal{A} \to \mathbb{R}$ to be the reward approximation function. Additionally, let $(\epsilon,\delta)$ be fixed constants. Then $Z_t = \sigma(H_t)$ satisfies the following properties:
    \begin{itemize}
        \item For any history $H_t$ and action $a_t$, we have
        \begin{align*}
            |\mathbb{E}[R_t \mid H_t = h_t, A_t = a_t] - \hat{r}(\sigma(H_t),a_t)| \leq \epsilon.
        \end{align*}
        \item For any history $H_t$ and action $a_t$ and any Borel subset $B \in \mathbb{Z}$, define $\mu_t(B) = \hat{P}(Z_{t+1} \in B \mid H_t = h_t, A_t = a_t)$ and $v_t(B) = \mathbb{P}(B \mid \sigma(H_t),a_t).$ Then
        \begin{align*}
            d_{TV}(\mu_t,v_t) \leq \delta.
        \end{align*}
    \end{itemize}
\end{definition}
Note that the grouping operator $\mathcal{G}$ in our work corresponds to the approximate information state generator $\sigma$ in their paper. 
\begin{lemma}
    Consider the approximate information state generator $\sigma$ to be a function which takes a history $H \in \mathbb{H}$ to an approximate information state $z \in \mathbb{Z}$, where for simplicity of our analysis, assume $\mathbb{Z}$ is a discrete set. Define a fixed point equation for the approximate information states as follows: 
    \begin{equation}
        \hat{V}(z,a) = \max_{a}\big[\hat{r}(z,a) + \gamma \sum_{\mathbb{Z}} \hat{V}(z') \hat{P}(z'\mid z,a)\big].\nonumber
    \end{equation}
    Let $\hat{Q}^*$ denote the solution of the fixed point equation and $Q^*$ denote the optimal Q-function for the POMDP.
    Then, for all histories $H_t \in \mathbb{H}$, using results from Lemma 49 and Theorem 27 in \cite{mahajan}, we have
    \begin{equation}
        |\hat{V}^*(H_t) -  V^*(\sigma(H_t))| \leq \frac{\epsilon}{1-\gamma} + \frac{2\gamma\delta\bar{r}}{(1-\gamma)^2}.\nonumber
    \end{equation}
\end{lemma}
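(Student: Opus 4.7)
The plan is to set $\Delta := \sup_{H \in \mathbb{H}} \lvert V^*(H) - \hat{V}^*(\sigma(H)) \rvert$, which is finite since both value functions are uniformly bounded by $\bar{r}/(1-\gamma)$, and then to derive a scalar fixed-point inequality for $\Delta$ and solve it. The structure mirrors the self-referential Bellman argument used in the proof of Theorem \ref{thm: main result}: I would fix an arbitrary history $H_t$, write Bellman optimality for the POMDP,
\begin{equation*}
V^*(H_t) = \max_a \bigl\{ \mathbb{E}[R_t \mid H_t, a] + \gamma\, \mathbb{E}[V^*(H_{t+1}) \mid H_t, a] \bigr\},
\end{equation*}
and for the approximate information state MDP,
\begin{equation*}
\hat{V}^*(\sigma(H_t)) = \max_a \bigl\{ \hat{r}(\sigma(H_t), a) + \gamma\textstyle\sum_{z'} \hat{V}^*(z')\, v_t(z') \bigr\},
\end{equation*}
and then bound $\lvert V^*(H_t) - \hat{V}^*(\sigma(H_t))\rvert$ by evaluating both right-hand sides at the maximizer $\hat{a}$ of whichever is larger.

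The reward contribution is at most $\epsilon$, directly from the first defining property of the approximate information state. For the transition contribution, the plan is to insert the intermediate quantity $\mathbb{E}[\hat{V}^*(\sigma(H_{t+1})) \mid H_t, \hat{a}] = \sum_{z'} \hat{V}^*(z')\, \mu_t(z')$ via the triangle inequality, producing two pieces: (i) $\lvert \mathbb{E}[\, V^*(H_{t+1}) - \hat{V}^*(\sigma(H_{t+1})) \mid H_t, \hat{a}\,] \rvert$, which is bounded by $\Delta$ by definition; and (ii) $\lvert \sum_{z'} \hat{V}^*(z')\,(\mu_t(z') - v_t(z')) \rvert$, a weighted TV-type quantity with $\mu_t, v_t$ both probability mass functions on $\mathbb{Z}$.

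The key step is bounding (ii) by invoking Lemma \ref{lemma: norm bound} with $\boldsymbol{a} = \mu_t$, $\boldsymbol{c} = v_t$, and $\boldsymbol{b} = \boldsymbol{d} = \hat{V}^*$. In this symmetric case the $\|\boldsymbol{b}-\boldsymbol{d}\|_\infty$ terms in the lemma vanish, so the bound collapses to $\tfrac{1}{2}\|\mu_t - v_t\|_1 \cdot \|\hat{V}^*\|_\infty \leq \delta \cdot \bar{r}/(1-\gamma)$, using $d_{TV}(\mu_t, v_t) \leq \delta$ and $\|\hat{V}^*\|_\infty \leq \bar{r}/(1-\gamma)$. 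Assembling everything gives the recursion $\Delta \leq \epsilon + \gamma\Delta + 2\gamma\delta\bar{r}/(1-\gamma)$, where the factor of $2$ conservatively absorbs the symmetrization needed when passing to $\sup_{H_t}$ on both directions of the absolute value. Rearranging yields the claimed inequality $\Delta \leq \tfrac{\epsilon}{1-\gamma} + \tfrac{2\gamma\delta\bar{r}}{(1-\gamma)^2}$.

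The main obstacle is the transition decomposition: one must identify the correct auxiliary distribution $\mu_t$ so that the recursive $\Delta$-contribution separates cleanly from the part handled by Lemma \ref{lemma: norm bound}, and so that the hypotheses of that lemma (two genuine pmfs) are satisfied. Once this decomposition is set up, the $\boldsymbol{b} = \boldsymbol{d}$ specialization is routine. I note that a strictly tighter refinement, in the spirit of Theorem \ref{thm: main result}, could be extracted by applying Lemma \ref{lemma: norm bound} with $\boldsymbol{b} \neq \boldsymbol{d}$ (taking $\boldsymbol{b}, \boldsymbol{d}$ to be the two value functions at different arguments and using $\|\boldsymbol{b}-\boldsymbol{d}\|_\infty = \Delta$), which would replace $(1-\gamma)^2$ in the denominator by $(1-\gamma)\bigl((1-\gamma) + \gamma\delta\bigr)$; this realizes the explicit improvement over \cite{mahajan} advertised in the introduction, while the weaker statement above already follows directly from the argument sketched here.
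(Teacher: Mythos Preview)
Your derivation of the stated bound is correct and is essentially the original \cite{mahajan} argument: triangle-inequality split of the transition term into a recursive piece ($\leq\Delta$) and a total-variation piece ($\leq 2\delta\bar r/(1-\gamma)$), then solve $\Delta\leq\epsilon+\gamma\Delta+2\gamma\delta\bar r/(1-\gamma)$. Your invocation of Lemma~\ref{lemma: norm bound} with $\boldsymbol{b}=\boldsymbol{d}$ is vacuous here---it reduces to the elementary inequality $|\langle f,\mu-\nu\rangle|\leq\|\mu-\nu\|_{TV}\|f\|_\infty$---and the factor-of-$2$ justification via ``symmetrization'' is not the real source (it comes from the $\|\mu-\nu\|_1=2d_{TV}$ convention, or equivalently from not assuming $\hat V^*\geq 0$ as Lemma~\ref{lemma: norm bound} would require).

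The paper's proof section, however, does not re-derive the stated bound; it establishes the sharper one you anticipate in your final paragraph. The difference from your approach is that the paper does \emph{not} insert $\sum_{z'}\hat V^*(z')\mu_t(z')$ and split by triangle inequality. Instead it rewrites $\mathbb{E}[V^*(H_{t+1})\mid H_t,\hat a]$ by grouping observations $y$ according to their target information state $z'=\sigma(H_t\,\|\,\{y,\hat a\})$, obtaining $\sum_{z'}\mu_t(z')\,b_{z'}$ where each $b_{z'}$ is a convex combination of POMDP values $V^*(H_{t+1})$ over those $y$ landing in $z'$. Now both sums are indexed by $z'$, and crucially $|b_{z'}-\hat V^*(z')|\leq\omega$ for every $z'$, so Lemma~\ref{lemma: norm bound} applies directly with $\boldsymbol{a}=\mu_t$, $\boldsymbol{c}=v_t$, $\boldsymbol{b}=b$, $\boldsymbol{d}=\hat V^*$ to give the recursion $\omega\leq\epsilon+\gamma\omega(1-\delta/2)+\gamma\delta\bar r/(1-\gamma)$ and hence the improved denominator $(1-\gamma)(1-\gamma+\gamma\delta/2)$. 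Your final paragraph has the right instinct that $\boldsymbol{b}\neq\boldsymbol{d}$ is the point, but the grouping-by-$z'$ step---which aligns the two expectations over a common index set and is precisely what makes $\|\boldsymbol{b}-\boldsymbol{d}\|_\infty\leq\omega$ hold---is the missing piece of the decomposition you flag as the main obstacle.
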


In the following, we show how to modify the proof in \cite{mahajan} to obtain a better bound using Lemma \ref{lemma: norm bound}.

\textbf{Proof:}\\
   \begin{align}
       &|\hat{V}^*(H_t) -  V^*(\sigma(H_t))|\nonumber \\
       &= |\max_{a} \big[\mathbb{E}[R_t + \gamma V^*(H_{t+1})\mid H_t = h_t, A_t = a_t] \big] - \max_a \big[\hat{r}(\sigma(h_t),a_t) + \gamma \sum_{z'} \hat{V}(z') \hat{P}(z'\mid z,a)\big]| \nonumber \\
       &\leq |\mathbb{E}[R_t \mid H_t = h_t, A_t = a'] - \hat{r}(\sigma(h_t),a'| + \gamma |\mathbb{E}[V^*(H_{t+1})\mid H_t = h_t, A_t = a'] - \sum_{z'} \hat{V}(z')  \hat{P}(z'\mid z,a')| \nonumber \\
       &\leq \epsilon + \gamma |\sum_{y} V^*(h_t \,\|\, \{y,a'\}) \mathbb{P}(y \mid  H_t = h_t, A_t = a') - \sum_{z'} \hat{V}(z')  \hat{P}(z'\mid \sigma(h_t),a')| \nonumber \\
       &= \epsilon + \gamma |\sum_{z'} \Big(\sum_{y : \sigma(h_t \,\|\, \{y,a'\} = z') } V^*(h_t \,\|\, \{y,a'\}) \frac{\mathbb{P}(y \mid  H_t = h_t, A_t = a')}{\sum_{y : \sigma(h_t \,\|\, \{y,a'\} = z') } \mathbb{P}(y \mid  H_t = h_t, A_t = a')}\Big)  \hat{P}(z'\mid H_t,a') \nonumber \\
       &- \sum_{z'} \hat{V}(z')  \hat{P}(z'\mid \sigma(h_t),'a)|
   \end{align} 
Suppose, for all $H_t$, $|\hat{V}^*(H_t) -  V^*(\sigma(H_t))| \leq \omega.$ This implies that 
\begin{equation}
    \Big|\sum_{y : \sigma(h_t \,\|\, \{y,a'\} = z') } V^*(h_t \,\|\, \{y,a'\}) \frac{\mathbb{P}(y \mid  H_t = h_t, A_t = a')}{\sum_{y : \sigma(h_t \,\|\, \{y,a'\} = z') } \mathbb{P}(y \mid  H_t = h_t, A_t = a')} - \sum_{y : \sigma(h_t \,\|\, \{y,a'\} = z') } \tilde{V}(z') \Big| \leq \omega
\end{equation}

At this point, we can use Lemma \ref{lemma: norm bound}. Therefore, we obtain
\begin{align}
    \omega &\leq \epsilon + \gamma\frac{\delta \bar{r}}{ (1-\gamma)} + \gamma\omega\Big(1-\frac{\delta}{2}\Big) \nonumber \\
    &\leq \frac{\epsilon}{1-\gamma+\delta/4} + \frac{\gamma\delta\bar{r}}{(1-\gamma)(1-\gamma+\delta/2)}.
\end{align}
\subsection{Comparison of our bounds in Theorem \ref{thm: main result} with \cite{semih}}
 We can do a direct comparison with Theorem 4.4 of \cite{semih} and show how we obtained our improved bound. Specifically, our Theorem \ref{thm: fin thm} aggregates the bound from our Theorem \ref{thm: main result}, the TD-learning error from Lemma \ref{Lemma: Approx TD}, and the final error due to policy optimization. Each component can be compared as follows:
\begin{enumerate}
    \item Error due to the mismatch between optimal value functions of the POMDP and SuperState MDP — corresponds to Theorem \ref{thm: main result} and is comparable to $\epsilon_{inf}$ in \cite{semih}.
    \item Standard TD-learning error — the terms in Lemma \ref{Lemma: Approx TD} without $(1 - \rho)^l$, comparable to the first term of $\epsilon_{critic}$ in \cite{semih}.
    \item Additional TD-learning error from sampling mismatch, i.e., the terms in Lemma \ref{Lemma: Approx TD} with $(1-\rho)^l$ (comparable to $\epsilon_{pa}$ of \cite{semih}.
    \item Function approximation error — similar to $l_{CFA}$ in \cite{semih}.
    \item Policy optimization error - comparable to $\epsilon_{actor}$ in \cite{semih}.
\end{enumerate}

We improve upon the bounds in (1) and (3) which leads to the difference mentioned at the beginning of this response. We believe our analysis offers a detailed decomposition and sharp insight into the sources of error.

\section{Analysis of the Practical Validity of Assumption \ref{as:UFSC}}
\subsection{Analyzing Assumption \ref{as:UFSC} for a Practical Example}
\label{toy-example}
We consider a practical example of modelling \textbf{Customer Behavior Modeling in Retail}. 

Here:

States represent engagement levels such as \{Uninterested, Browsing, Considering, Purchasing\}.

Observations are features like the number and type of clicks (e.g., “Viewed Product”, “Added to Cart”).

In such systems, observations from adjacent engagement states tend to overlap significantly: for instance, both “Browsing” and “Considering” may involve product views and occasional cart additions. Furthermore, customer behavior is highly dynamic—users frequently move between these states in short time spans (e.g., from "Considering" back to "Browsing" or forward to "Purchasing"). This overlap in observation distributions and frequent transitions among states leads to high mixing, thereby increasing the Dobrushin coefficient and ensuring filter stability.

Next we present a simplified example to model customer behaviour:

\textbf{States:} $s_0$ (Uninterested), $s_1$ (Browsing), $s_2$ (Considering), $s_3$ (Purchasing).

\textbf{Actions:} $a_0$: Show generic homepage, $a_1$: Recommend trending products

\textbf{Observations:} $y_0$: No clicks, $y_1$: Viewed product, $y_2$: Added to cart, $y_3$: Purchased

\textbf{Transition Probabilities for $a_0$:}

\[
\begin{array}{c|cccc}
s_t \rightarrow s_{t+1} & s_0 & s_1 & s_2 & s_3 \\
\hline
s_0 & 0.4 & 0.4 & 0.1 & 0.1 \\
s_1 & 0.3 & 0.3 & 0.2 & 0.2 \\
s_2 & 0.2 & 0.3 & 0.3 & 0.2 \\
s_3 & 0.1 & 0.2 & 0.4 & 0.3 \\
\end{array}
\]

\textbf{Transition Probabilities for $a_1$:}

\[
\begin{array}{c|cccc}
s_t \rightarrow s_{t+1} & s_0 & s_1 & s_2 & s_3 \\
\hline
s_0 & 0.4 & 0.3 & 0.2 & 0.1 \\
s_1 & 0.2 & 0.4 & 0.2 & 0.2 \\
s_2 & 0.1 & 0.3 & 0.4 & 0.2 \\
s_3 & 0.1 & 0.2 & 0.3 & 0.4 \\
\end{array}
\]

\textbf{Observation Kernel:}

\[
\begin{array}{c|cccc}
s_t \rightarrow y_t & y_0 & y_1 & y_2 & y_3 \\
\hline
s_0 & 0.8 & 0.2 & 0.0 & 0.0 \\
s_1 & 0.3 & 0.5 & 0.2 & 0.0 \\
s_2 & 0.1 & 0.3 & 0.4 & 0.2 \\
s_3 & 0.0 & 0.1 & 0.3 & 0.6 \\
\end{array}
\]

Therefore, referring to Theorem 5 of \cite{kara_dob}, we calculate the Dobrushian coeffients to be $\delta(P) = 0.5$ and $\delta(\phi) = 0.1$, which satisfies the sufficient condition:
\[
(1-\delta(P))(1-\delta(\phi)) < 1,
\]
implying that the Filter Stability condition is satisfied for this case.

We believe that Assumption \ref{as:UFSC} covers many practical examples and future work can consider a multi-step variant where the system exhibits contraction after every $k$ steps. We believe that our results can be extended to such a setting.

\subsection{A Counter Example}
To motivate future work, we also provide a counter example (suggested by one of the reviewers) where Assumption $\ref{as:UFSC}$ does not hold. 

\begin{figure}[h!]
    \centering
    \includegraphics[width=0.3\linewidth]{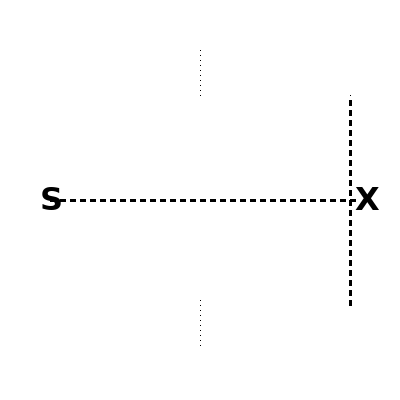} 
    \caption{T-Maze }
    \label{fig:tmaze}
\end{figure}

Consider a T-Maze as described in \cite{bakker}. Here we present an infinite-horizon variant. The domain consists of a T-shaped maze, as shown in Fig \ref{fig:tmaze}. The agent is initially located at the beginning of a corridor (position S), which is followed by a junction (position X) where the agent can step into two different directions, and can keep going into that direction. The goal of the agent is to step into the correct direction, where the correct direction is determined by a piece of information the agent is given at the very beginning, when he is located at the beginning of the corridor (position S). The agent is given a non-zero reward $R > 0$ in every cell after taking the correct direction.

The T-maze is naturally modelled as a POMDP. The hidden states consist of

\begin{itemize}
    \item An initial state $s_0$
    \item States $s_i^d$ for every position in the corridor, including the junction, standing for the fact that the agent is in position $i$
 and the initial observation specified that the correct direction is $d \in \{1,2\}$ (consider that $d=1$ stands for "left" and $d=2$ stands for right).
 \item States $r_i^d$ for $i \in \mathbb{N}$
 and $d \in \{1,2\}$, standing for the fact that the agent is in the $i^{th}$ cell after stepping into the first of the two possible directions, and given that the initial observation said that the correct direction to take is direction $d$
 \item States $q_i^d$ for $i \in \mathbb{N}$
 and $d \in \{1,2\}$, standing for the fact that the agent is in the $i^{th}$ cell after stepping into the second of the two possible directions, and given that the initial observation said that the correct direction to take is direction $d$.

Note that transitions in the hidden state space are deterministic. Given the entire history of observations and actions, the agent can determine the current hidden state, hence belief states are sharp, always assigning probability 1 to a hidden state and 0 to the others. If only a suffix of the history is available, the agent can only determine its current location, and it cannot distinguish between states $s_i^1$
 and $s_i^2$
, between states $r_i^1$
 and $r_i^2$
, and between states $q_i^1$
 and $q_i^2$
. In other words, belief states will assign probability 1/2 to each of the two possible states.

The optimal value function, given an entire history $H$ is roughly $V^* = R(1/(1-\gamma))$. However, the value of any superstate 
 $\mathcal{G}(H)$ is roughly 
$V^*/2$, since the value is determined by the belief, which is uniform over non-distinguishable stats once we remove the first observation. The difference between the two values is $V^*/2$ 
, which is constant wrt to $l$ as opposed to the bound which we obtain in Theorem \ref{thm: main result} which goes to 0 for $l \rightarrow \infty$. The reason why Theorem \ref{thm: main result} does not hold in this case is due to the contraction property required for Assumption \ref{as:UFSC} being non-applicable due to the unique structure of the POMDP.

\end{itemize}
\section{Experimental Results}

\subsection{Effect of History Length and Observation Noise}

Previous works in the literature have focused heavily on solving POMDPs by considering past $k$ observations to design optimal policy. However, for the sake of completeness, we evaluate the performance of our algorithm on a partially observable variant of the FrozenLake-v1 environment. The environment is modified to introduce observation noise, where with probability $p$, the agent receives a random observation different from the true state. All the implementation code is available at this code \href{https://github.com/ameyanjarlekar/Policy-Based-RL-For-POMDPs}{repository}.

The agent uses a history-based state representation by maintaining the last $k$ observation-action pairs. We study the impact of both the observation noise probability $p$ and the history length $k$ on the agent's performance.

\paragraph{Setup.}
\begin{itemize}
    \item Environment: FrozenLake-v1 with \texttt{is\_slippery=False}.
    \item Observation Noise: With probability $p$, the observed state is replaced with a random incorrect state.
    \item History: The agent encodes the last $k$ (observation, action) pairs as the state.
    \item Algorithm: POLITEX with TD(0)-based Q-value estimation and exponentiated gradient policy updates.
    \item Compute: All the experiments were performed on the Google Colab CPU.
\end{itemize}

\paragraph{Results.} The following plot captures the average reward per episode for varying history lengths $k$ and observation noise levels $p$. The moving average was computed over a sliding window of episodes.

\begin{figure}[h!]
    \centering
    \includegraphics[width=0.7\linewidth]{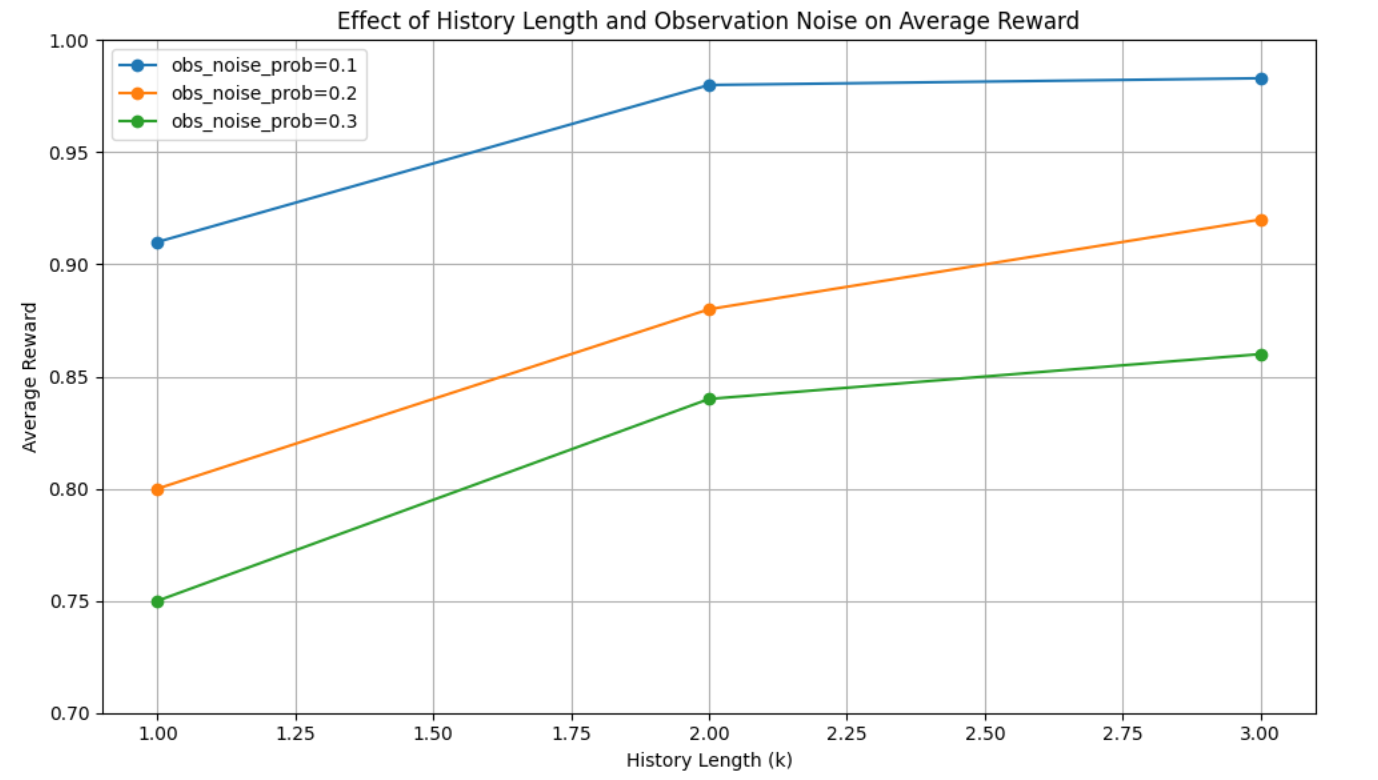} 
    \caption{Average reward per episode for varying history lengths $k$ and observation noise levels $p$. }
    \label{fig:reward_plot}
\end{figure}

\paragraph{Observations.}
\begin{itemize}
    \item Increasing the history length $k$ consistently improves performance under partial observability.
    \item Higher observation noise degrades performance, but the use of longer histories mitigates the effect.
    \item Even with moderate to high noise levels ($p = 0.3$), using $k=3$ allows the agent to recover much of the performance.
\end{itemize}

\subsection{Comparison with \cite{semih}}
 In our paper, we show that we improve upon the bound in \cite{semih}. using a computationally lighter algorithm. To experimentally illustrate this result, we consider an example for a simple partially observable Markov decision process (POMDP) with two states, two actions, and two observations. The environment is stochastic, with state transitions and observations defined by fixed probabilities, and rewards designed to encourage taking the correct action in the hidden state.

To handle partial observability, we represent the agent’s state by a finite history of recent action-observation pairs, with history lengths of 1 and 2 tested.

For each setting, the agent trains over 200 episodes, each of fixed length 20 steps. The learning rate $\alpha$ is set to 0.1 and the discount factor $\gamma$ to 0.9. Policies are represented tabularly and updated greedily with respect to Q-values after each episode.

We measure the agent's performance by total reward accumulated per episode and analyze learning curves as well as final average rewards (averaged over the last 20 episodes) to assess convergence and policy quality.

\begin{figure}[h!]
    \centering
    \includegraphics[width=0.7\linewidth]{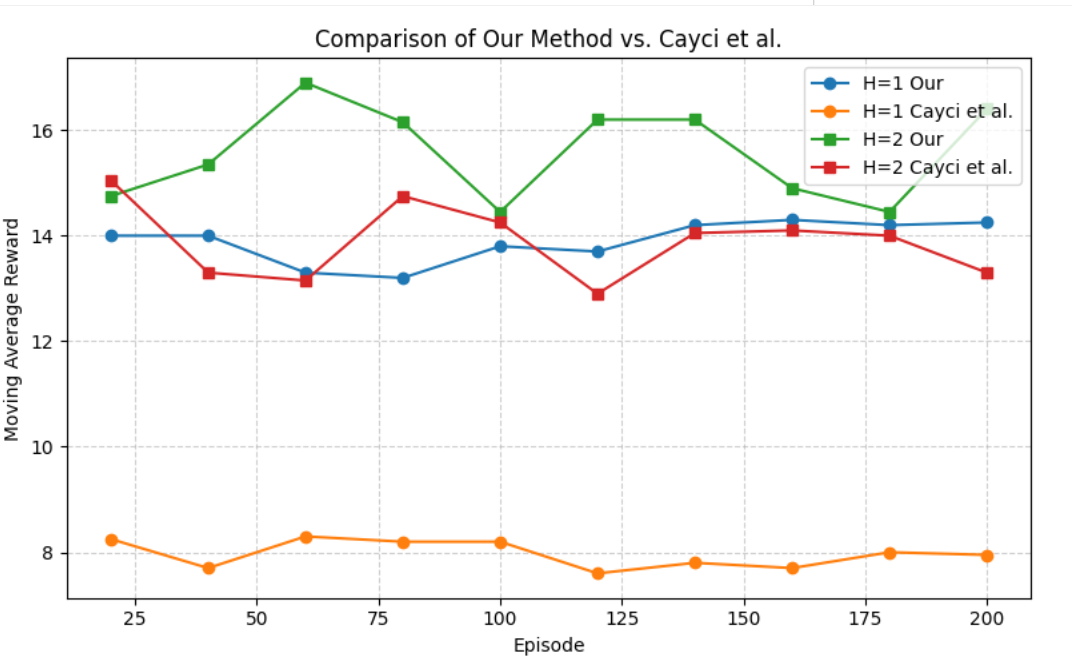} 
    \caption{Comparing Moving Average Reward }
    \label{fig:reward_plot2}
\end{figure}

Additionally, several prior works have empirically demonstrated the effectiveness of using finite histories of observations as surrogates for hidden states in partially observable reinforcement learning problems. For example, \cite{expt1} leverages pretrained language transformers to compress observation histories into compact representations, showing improved sample efficiency on POMDP benchmarks. This is similar to our work if we consider the feature vectors generated by the language transformers as the feature vectors. Similarly, \cite{expt2} presents a framework that adaptively uses privileged state information during training while deploying policies that rely on observation histories. Other related studies, such as \cite{expt3} and \cite{expt4}, support the use of history-based or memory-augmented policies with policy gradient methods.

\end{document}